\theoremstyle{plain}
\newtheorem{theorem}{Theorem}[section]
\newtheorem{corollary}[theorem]{Corollary}
\theoremstyle{definition}
\theoremstyle{remark}
\newcommand{\bb}[1]{\mathbf{#1}}
\icmltitlerunning{An in depth look at the Procrustes-Wasserstein distance}
\begin{document}

\twocolumn[
\icmltitle{An in depth look at the Procrustes-Wasserstein distance: properties and barycenters}


\icmlsetsymbol{equal}{*}

\begin{icmlauthorlist}
\icmlauthor{Davide Adamo}{label1,label2}
\icmlauthor{Marco Corneli}{label1,label2}
\icmlauthor{Manon Vuillien}{label1}
\icmlauthor{Emmanuelle Vila}{label3}
\end{icmlauthorlist}

\icmlaffiliation{label1}{Universit\'e C\^ote d'Azur, UMR 7264 CEPAM, CNRS, Nice, France}
\icmlaffiliation{label2}{Universit\'e C\^ote d'Azur, Inria, CNRS, Laboratoire J.A. Dieudonn\'e, Maasai team, Nice, France}
\icmlaffiliation{label3}{Universit\'e Lumi\'ere Lyon II, UMR 5133 Arch\'eorient CNRS, Lyon, France}

\icmlcorrespondingauthor{Davide Adamo}{davide.adamo@univ-cotedazur.fr}

\icmlkeywords{Optimal transport, Procrustes-Wasserstein}

\vskip 0.3in
]



\printAffiliationsAndNotice{}  

\begin{abstract}
Due to its invariance to rigid transformations such as rotations and reflections, Procrustes-Wasserstein (PW) was introduced in the literature as an optimal transport (OT) distance, alternative to Wasserstein and more suited to tasks such as the alignment and comparison of point clouds.
Having that application in mind, we carefully build a space of discrete probability measures and show that over that space PW actually \emph{is} a distance.
Algorithms to solve the PW problems already exist, however we extend the PW framework by discussing and testing several initialization strategies.
We then introduce the notion of PW barycenter and detail an algorithm to estimate it from the data. The result is a new method to compute representative shapes from a collection of point clouds.
We benchmark our method against existing OT approaches, demonstrating superior performance in scenarios requiring precise alignment and shape preservation. We finally show the usefulness of the PW barycenters in an archaeological context. Our results highlight the potential of PW in boosting 2D and 3D point cloud analysis for machine learning and computational geometry applications.
\end{abstract}

\section{Introduction}
\label{Sec: Introduction}

\begin{figure}[t!]
\begin{center}
\centerline{\includegraphics[width=\linewidth]{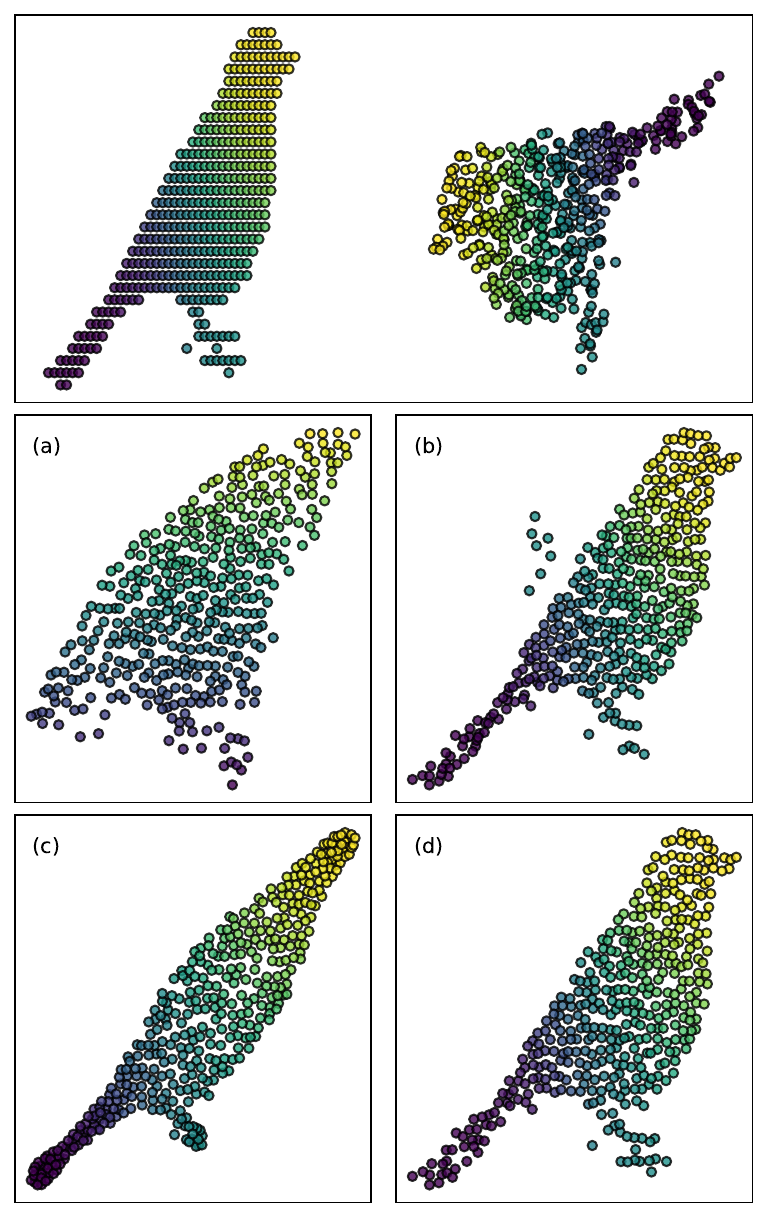}}
\caption{(Top) Two point clouds representing a bird shape in different position. OT barycenters using (a) Exact Free Wasserstein \citep{cuturi2014fast} (b) Gromov-Wasserstein \citep{peyre2016gromov} with MDS \citep{borg2007modern} (c) Gromov-Wasserstein with TSNE \citep{van2008visualizing} (d) Procrustes-Wasserstein (our).}
\label{Fig: OT barycenters comparison}
\end{center}
\vskip -0.2in
\end{figure}

In force of its capability to find correspondences between sets of objects, in the last decade computational optimal transport~\citep[OT,][]{peyre2019computational} has become more and more ubiquitous in machine learning. Notable examples relate to learning tasks from (almost) any data type including images~\citep{solomon2015convolutional,feydy2017optimal}, graphs~\citep{titouan2019optimal,vincent2021online}, shapes~\citep{eisenberger2020deep} or text~\citep{zhang-etal-2017-earth,grave2019unsupervised}. More generally and possibly more importantly OT allows one to assess the distance between probability distributions thus leading to applications in machine learning that go far beyond the comparison of sets of objects, such as domain adaptation~\citep{courty2016optimal} or adversarial training~\citep{arjovsky2017wasserstein}, just to cite some. However, here we keep the focus on the first framework we cited (i.e. data alignment and matching) since the applications we discuss in this work are of that kind. 

Based on the modern formulation of \citet{kantorovich1942translocation}, the standard optimal transport tool to compare two sets of objects is the Wasserstein distance. If we assume that each set is a point cloud, in order to fix the ideas, adopting the Wasserstein distance to quantify the similarity between the clouds specifically requires to compute the Euclidean distance (or other) between the points of the first cloud \emph{and} those of the second. Since, moreover, each point is equipped with a probability mass defining its importance \emph{within} its cloud, we can say that the Wasserstein distance takes into account both the geometry and the distributional properties of data.
However, the Wasserstein distance suffers from some limitations that makes it unfit to some applications such as point cloud matching. Indeed, it is sensitive to the way the two clouds are embedded in the space and in particular to isometries.  
To address some of the limitations of the Wasserstein distance, \citet{memoli2011gromov} introduced the Gromov-Wasserstein (GW) distance. Unlike standard OT frameworks, which assume that the compared probability measures are supported on a shared metric space, GW compares distributions defined on distinct spaces. In the point cloud matching examples, GW requires computing two pairwise distance (or similarity) matrices between the points \emph{within} each cloud. Points not being in the same cloud are never compared explicitly. GW is invariant to isometries and particularly suitable for the comparison of data sets with unknown correspondences or in different coodinate systems. However, GW has (at least) two main drawbacks: i) its rather prohibitive computational cost, although some solutions exist~\citep{titouan2019sliced,chowdhury2021quantized} and ii) the GW barycenters are \emph{still} pairwise distance/similarity matrices. If one wishes to represent them in the original features domain, dimensionality reduction techniques are needed. 
This last drawback can be severe when computing mean shapes where a high fidelity to the original is required (see Figure~\ref{Fig: OT barycenters comparison}).

Mixing Procrustes and Wasserstein costs was recently done~\citep[][]{zhang-etal-2017-earth,grave2019unsupervised} in order to introduce into the Wasserstein optimization problem  invariances to global transformations such as rotations and reflections in the space. In this sense Procrustes-Wasserstein (PW) can be seen as a compromise between GW (with whom it shares some invariances) and Wasserstein (since the two measures are directly compared with each other).

\paragraph{Related works.} Among the earliest PW formulations, \citet{zhang-etal-2017-earth,grave2019unsupervised} aimed at jointly estimating an orthogonal and a permutation matrix to align word embeddings across different languages. Differently from \citet{zhang-etal-2017-earth}, which initialized the orthogonal matrix using an adversarial training phase, \citet{grave2019unsupervised} proposed a convex relaxation of the initialization by reformulating the problem over the convex hull.
\citet{alvarez2019towards} extended the previous works by incorporating global invariances directly into the optimization process. Their approach is  not limited to invariances with respect to isometries but generalizes to broader invariance classes (characterized by Schatten $p$-norm ball) up to the recovery of Gromov-Wasserstein. This extension is particularly useful in scenarios where data are not simply related by rigid transformations. Additionally, employing a convexity-annealing strategy and considering a relaxed PW version, they eliminate the need for an ad-hoc initialization, avoiding strong dependence on an initial guess.
{In contrast with previous works, two-sided PW~\citep[TWP,][]{jin2021two} adopts a two-fold transformation on both the source and target measures.  
Such an extension enables to handle data that lie in distinct spaces, transporting them into a common latent space. The optimal solution is obtained by solving a component-wise convex optimization problem, combining two-sided Procrustes Analysis with a relaxed Wasserstein formulation.
\citet{aboagye2022quantized} tackle the computational limit of PW by proposing a quantized version of the problem (qWP). The quantization step that discretizes the distributions enables for the joint estimate of the alignment and transformation. qWP leverages a quantization procedure inspired by \citet{grave2019unsupervised}, such as \emph{k}-means++, and reduces the problem to linear programming (LP). This technique not only simplifies the computation but also enhances the approximation quality of OT solvers, thus leading to a more efficient solutions with a fixed computational cost.}
Finally, \citet{even2024aligning} approach the problem of matching pairs of distributions using PW distances from a theoretical perspective, providing convergence guarantees for the ML estimators of both the transport plan and the isometry. In more detail, they restrict their focus on discrete distributions with the same number of points in the support, further assuming that one distribution can be obtained from the other through a permutation and isometry of the support and the addition of Gaussian noise. The corresponding OT problem falls under the Monge formulation and instead of looking for doubly stochastic plans, they look for permutation matrices.

\paragraph{Contribution of our work.} 
Despite the heterogeneous use of the PW cost in the above mentioned works, to the best of our knowledge i) it was never showed that Procrustes-Wasserstein distance actually \emph{is} a distance; ii) PW barycenters were never defined/learned from the data. 
With a focus on scenarios where the objects to compare are geometric shapes represented as point clouds (and hence working with discrete measures) the main contribution of this paper is twofold: we define a quotient space of discrete measures over which PW is a distance and we provide an estimation algorithm  for the PW barycenter.
We then show that one of the main advantages of PW is its capability to produce very faithful barycenters in particular conditions.
In the illustrative example in Figure~\ref{Fig: OT barycenters comparison}, two birds differ in both number of vertices and pose (rotation and/or reflection). As it can be seen, among the three tested OT methods, the PW barycenter result in more consistent geometric characteristics. We finally present a concrete application of the PW barycenters to detect morphological changes on archaeozoological data.

The paper is organized as follows: we provide a background on the PW problem and the formal definitions and proof where PW is a distance in section~\ref{Sec: The Procrustes-Wasserstein metric}. We then introduce the PW barycenters and the algorithm to compute it in section~\ref{Sec: barycenters}. We investigate different intializations for specific match in point clouds and a clustering application on our barycenter in section~\ref{Sec: Experiments}. We conclude presenting a concrete real-world application in section~\ref{Sec: archaeology}.

\section{Procrustes-Wasserstein: an OT distance}
\label{Sec: The Procrustes-Wasserstein metric}

\paragraph{Notation.} We denote by $\Sigma_n$ the $n-1$ probability simplex. So when saying that $\bb{p}:=(p_1, \dots, p_n)\in \Sigma_n $, we mean $p_i \geq 0$ for all $i$ and  $\sum_{i=1}^n p_i = 1$. We denote by $\langle \cdot, \cdot \rangle_F$ the Frobenious dot product, hence $\langle A, B\rangle_F := \text{trace}(B^T A)$, with $A, B$ two compatible matrices. The set of the orthogonal matrices of order $d$ is denoted by $\mathcal{O}(d)$.

Consider two matrices ${X} \in \mathbb{R}^{n \times d}$ and ${Y} \in \mathbb{R}^{m \times d}$, where $\bb{x}_i$ (respectively $\bb{x}^j$) is the i-th row (j-th column) of ${X}$. Similarly for ${Y}$.
We attach two discrete probability measures $\mu_X$ and $\mu_Y$ to $X$ and $Y$, respectively:
\begin{equation}
\mu_X = \sum_{i=1}^n p_i \delta_{\bb{x}_i}, \qquad \bb{p} \in \Sigma_n
\label{eq:discrete_measure}
\end{equation}
and 
\begin{equation*}
\mu_Y = \sum_{j=1}^m q_j\delta_{\bb{y}_j}, \qquad \bb{q} \in\Sigma_{m}.
\end{equation*}
Given an orthogonal matrix $P \in \mathcal{O}(d)$, we denote by $\mu_{YP}$ the measure defined on the transformed support of $Y$, namely $\mu_{YP}:= \sum_{j=1}^m q_j\delta_{\bb{y}_jP}$.  

Given $W_2(\mu_X, \mu_Y)$, the 2-Wasserstein distance between $\mu_X$ and $\mu_Y$, we attack the following minimization problem
\begin{equation}\label{Eq: PW}
 \min_{P \in \mathcal{O}(d)} W_{2}^2(\mu_X, \mu_{YP})  = \min_{\substack{P\in \mathcal{O}(d) \\ \Gamma \in \Pi(\bb{p},\bb{q})}} \langle C_{P}({X},{Y}), \Gamma \rangle_F
\end{equation}
where $\Pi(\bb{p},\bb{q})$ is the set of the admissible transport plans, i.e.
\[
\Pi(\bb{p},\bb{q}) = \{ \Gamma \in \mathbb{R}_+^{n \times m} | \Gamma \bb{1}_m = \bb{p}, \Gamma^T \bb{1}_n = \bb{q} \}
\]
and ${C_P(X,Y)} \in \mathbb{R}_+^{n \times m}$ with 
\[
\left( {C_P(X,Y)}\right)_{ij} = \lVert \bb{x}_i - \bb{y}_j P \rVert_2^2.
\]

The above minimization problem is a generalization of the one described in \citet{grave2019unsupervised} and can be seen as a particular case of the one discussed in \citet{alvarez2019towards}. 

By definition of ${C_P(X,Y)}$ it is easy to show that
\[
{C_P(X,Y)} = {R_X} + {R_Y} - 2{XP}^TY^T,
\]
where the i-th row of $R_X \in \mathbb{R}^{n \times m}$ is $(\lVert \bb{x}_i \rVert_2^2, \dots, \lVert \bb{x}_i \rVert_2^2)$ and the j-th column of $R_Y \in \mathbb{R}^{n \times m}$ is $(\lVert \bb{y}_j \rVert_2^2, \dots, \lVert \bb{y}_j \rVert_2^2)^T$. By plugging this into Eq.~\eqref{Eq: PW} and thanks to the bilinearity of $\langle \cdot, \cdot\rangle_F$ we get
\begin{equation}\label{Eq: QF}
\begin{split}
 \langle C_{P}({X},{Y}), \Gamma \rangle_F &= \langle R_X + R_Y, \Gamma \rangle_F - 2\langle XP^TY^T, \Gamma \rangle_F \\
 &= \langle \bb{u}, \bb{p} \rangle + \langle \bb{v}, \bb{q} \rangle - 2\langle XP^TY^T, \Gamma \rangle_F,
 \end{split}
\end{equation}
where $\bb{u} \in \mathbb{R}^n$ is such that $u_i = \lVert x_i \rVert_2^2$ and $\bb{v} \in \mathbb{R}^m$ such that $v_j = \lVert y_j \rVert_2^2 $. As such, the minimisation problem in Eq.~\eqref{Eq: PW} is equivalent to
\begin{equation}\label{Eq: PW2}
    \max_{\substack{P\in \mathcal{O}(d) \\ \Gamma \in \Pi(\bb{p},\bb{q})}} \langle XP^TY^T, \Gamma \rangle_F.
\end{equation}

We now consider the set $\mathcal{M}_d$ of all discrete measures of the same form as in Eq.~\eqref{eq:discrete_measure}. Namely, the generic $\mu_X \in \mathcal{M}_d$ is a measure supported on some $X \in \mathbb{R}^{n \times d}$, for some finite $n$ and a \emph{fixed} $d$ and for some probability vector $\bb{p}$.
With a slight abuse of notation, given a permutation $\sigma$ in $\mathcal{S}^{(n)}$, the set of all possible permutations of $n$ elements, we denote by $\sigma(X) = (\bb{x}^T_{\sigma(1)}, \dots, \bb{x}^T_{\sigma(n)})^T$ the matrix $X$ after the permutation of its rows according to $\sigma$. Similarly, we denote by $\sigma(\bb{p})=(p_{\sigma(1)}, \dots, p_{\sigma(n)})$ the permuted histogram. 
We introduce the following equivalence relation on $\mathcal{M}_d$
\begin{equation*}
    \mu_{X_1}  \sim \mu_{X_2} \quad \text{if} \quad\exists P\in\mathcal{O}(d),\ \exists \sigma
    \in S^{(n)} 
\end{equation*}
such that $X_1 = \sigma(X_2) P$ and $\bb{p}_1 = \sigma(\bb{p}_2)$.

Thus, $\mu_{X_1} \sim \mu_{X_2}$ if and only if they share the same probability vector, up to a permutation, and the same support up to the same permutation of the points and a rigid transformation (rotation, reflection or a combination of both).

If we denote
\begin{equation}\label{Eq: PW-distance}
   PW_2(\mu_{X},\mu_{Y}) := \left(\min_{\substack{P\in \mathcal{O}(d) \\ \Gamma \in \Pi(\bb{p},\bb{q})}} \langle C_{P}({X},{Y}), \Gamma \rangle_F\right)^{1/2},  
\end{equation}
then

\begin{theorem}
\label{Prop:  P-W is a distance}
$PW_2(\cdot, \cdot)$ is a distance on $\mathcal{M}_d / \sim$.
\end{theorem}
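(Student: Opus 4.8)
The plan is to establish, one at a time, that $PW_2$ is well defined on $\mathcal{M}_d/\!\sim$, non-negative, symmetric, separating, and that it satisfies the triangle inequality. Two elementary observations do most of the work. (i) The $2$-Wasserstein distance $W_2$ is itself a genuine metric on discrete probability measures on $\mathbb{R}^d$, and it is invariant under a common orthogonal transformation of its two arguments: for every $P\in\mathcal{O}(d)$ one has $W_2(\mu_{XP},\mu_{YP})=W_2(\mu_X,\mu_Y)$, since $\lVert\bb{x}P-\bb{y}P\rVert_2=\lVert\bb{x}-\bb{y}\rVert_2$ leaves the cost matrix unchanged and puts the admissible transport plans in bijection. (ii) The map $P\mapsto W_2^2(\mu_X,\mu_{YP})$ is continuous on $\mathcal{O}(d)$ — the entries of $C_P(X,Y)$ are polynomial in $P$, and the optimal value of the linear program appearing in Eq.~\eqref{Eq: PW-distance} is concave, hence continuous, in the cost matrix — so, $\mathcal{O}(d)$ being compact, the infimum defining $PW_2$ is attained.

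\textbf{Well-definedness and symmetry.} First I would check that $PW_2$ descends to the quotient. Permuting the rows of $X$ (resp.\ $Y$) does not change $\mu_X$ (resp.\ $\mu_Y$), so only the orthogonal factors matter; if $X_1=\sigma(X_2)S$ and $Y_1=\tau(Y_2)R$ with $R,S\in\mathcal{O}(d)$ (and the matching relations on $\bb{p},\bb{q}$), then by observation (i) one can move $S$ from one argument of $W_2$ to the other and, using that $\mathcal{O}(d)$ is a group, absorb both $R$ and $S$ into the matrix $P$ over which one minimises, leaving the value of \eqref{Eq: PW-distance} unchanged. Symmetry follows the same template: for fixed $P$, $W_2^2(\mu_X,\mu_{YP})=W_2^2(\mu_{YP},\mu_X)=W_2^2(\mu_Y,\mu_{XP^{-1}})$ by symmetry of $W_2$ and observation (i) applied with $P^{-1}=P^T$; as $P$ ranges over $\mathcal{O}(d)$ so does $P^{-1}$, hence $PW_2(\mu_X,\mu_Y)=PW_2(\mu_Y,\mu_X)$.

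\textbf{Separation.} Non-negativity is immediate from \eqref{Eq: PW-distance}. If $\mu_X\sim\mu_Y$, choose $P,\sigma$ with $X=\sigma(Y)P$ and $\bb{p}=\sigma(\bb{q})$; then $\mu_X=\mu_{YP}$ \emph{as measures}, so $W_2(\mu_X,\mu_{YP})=0$ and a fortiori $PW_2(\mu_X,\mu_Y)=0$. Conversely, if $PW_2(\mu_X,\mu_Y)=0$ then by observation (ii) there is $P\in\mathcal{O}(d)$ with $W_2(\mu_X,\mu_{YP})=0$, whence $\mu_X=\mu_{YP}$ because $W_2$ is a metric; equality of two discrete measures forces a mass-preserving bijection between their atoms, i.e.\ — in the canonical representation of $\mathcal{M}_d$, where the support points are distinct — a permutation $\sigma$ with $X=\sigma(YP)=\sigma(Y)P$ and $\bb{p}=\sigma(\bb{q})$, so $\mu_X\sim\mu_Y$.

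\textbf{Triangle inequality.} This is the step that carries the real content, and it is where minimising over $\mathcal{O}(d)$ could a priori destroy the metric structure; the resolution is that $\mathcal{O}(d)$ is closed under composition. Let $P^\star,Q^\star$ attain the minima defining $PW_2(\mu_X,\mu_Y)$ and $PW_2(\mu_Y,\mu_Z)$. Applying the common rotation $P^\star$ to both arguments of $W_2(\mu_Y,\mu_{ZQ^\star})$ and then the triangle inequality for $W_2$ through the intermediate measure $\mu_{YP^\star}$,
\begin{equation*}
PW_2(\mu_X,\mu_Z)\;\le\;W_2(\mu_X,\mu_{ZQ^\star P^\star})\;\le\;W_2(\mu_X,\mu_{YP^\star})+W_2(\mu_{YP^\star},\mu_{ZQ^\star P^\star}),
\end{equation*}
where the first inequality holds because $Q^\star P^\star\in\mathcal{O}(d)$ is an admissible competitor in \eqref{Eq: PW-distance}; the first term on the right equals $PW_2(\mu_X,\mu_Y)$ and, by observation (i), the second equals $W_2(\mu_Y,\mu_{ZQ^\star})=PW_2(\mu_Y,\mu_Z)$. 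I expect the genuinely delicate point to be not any of the metric inequalities but the bookkeeping in the separation step — making sure equality of discrete measures translates faithfully into the relation $\sim$, which is exactly why $\mathcal{M}_d$ must be built so that a measure has no redundant atoms.
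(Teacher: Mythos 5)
Your proof is correct, and its skeleton is the same as the paper's: verify the metric axioms for $PW_2$ directly from the metric properties of $W_2$ together with the invariance of $W_2$ under a common orthogonal transformation of both supports and the group structure of $\mathcal{O}(d)$. The differences are in execution, and they are worth noting. For symmetry, the paper works on the maximization form in Eq.~\eqref{Eq: PW2} and uses trace identities ($\mathrm{tr}(\Theta^T Y Q^T X^T)=\mathrm{tr}(\Theta X Q Y^T)$, cyclicity) to show that an optimal pair $(P^*,\Gamma^*)$ for one ordering yields $((P^*)^T,(\Gamma^*)^T)$ for the other; your route via $W_2(\mu_X,\mu_{YP})=W_2(\mu_Y,\mu_{XP^{-1}})$ and the substitution $P\mapsto P^{-1}$ is shorter and avoids the algebra. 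For the triangle inequality, the paper first derives the hybrid bound $PW_2(\mu_X,\mu_Y)\le PW_2(\mu_X,\mu_Z)+W_2(\mu_Y,\mu_Z)$ for every $\mu_Z$ and then substitutes $Z^*=ZP^*$ with $P^*$ optimal for the pair $(Y,Z)$, using $PW_2(\mu_X,\mu_{Z^*})=PW_2(\mu_X,\mu_Z)$; your version composes the two optimal rotations $Q^\star P^\star$ directly and is equivalent in substance but more streamlined. Finally, you supply three points the paper leaves implicit: that $PW_2$ is well defined on the quotient $\mathcal{M}_d/\!\sim$, that the minimum over $\mathcal{O}(d)$ is attained (compactness plus concavity, hence continuity, of the LP value in the cost matrix), which is what licenses the passage from $PW_2(\mu_X,\mu_Y)=0$ to the existence of $P$ with $W_2(\mu_X,\mu_{YP})=0$, and the caveat that the separation step requires representatives without repeated (or massless) atoms so that equality of measures really produces a permutation; the paper's proof asserts this last equivalence without comment, so your added care is a strength rather than a deviation.
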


The proof of the above theorem is in Supplementary Material~\ref{sec:proof_Th1}. Moreover we have the following 

\begin{corollary}
For all $\mu_X, \mu_Y$ in $\mathcal{M}_d$ it holds that $PW_2(\mu_X, \mu_Y) \leq W_2(\mu_X, \mu_Y)$.
\end{corollary}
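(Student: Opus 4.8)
The plan is to observe that the Procrustes--Wasserstein problem defining $PW_2$ in Eq.~\eqref{Eq: PW-distance} is a minimization over a strictly larger feasible set than the one defining the Wasserstein distance, with the identity matrix $I_d \in \mathcal{O}(d)$ recovering exactly the Wasserstein cost. Concretely, since both $\mu_X$ and $\mu_Y$ are supported in the same ambient space $\mathbb{R}^d$ (this is built into the definition of $\mathcal{M}_d$, which fixes $d$), the squared $2$-Wasserstein distance admits the Kantorovich form $W_2^2(\mu_X,\mu_Y) = \min_{\Gamma \in \Pi(\bb{p},\bb{q})} \langle C(X,Y), \Gamma\rangle_F$, where $C(X,Y)_{ij} = \lVert \bb{x}_i - \bb{y}_j \rVert_2^2 = \left(C_{I_d}(X,Y)\right)_{ij}$. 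In other words, $W_2^2(\mu_X,\mu_Y)$ is precisely the inner minimization in Eq.~\eqref{Eq: PW-distance} evaluated at the particular choice $P = I_d$.

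The argument then concludes in a single step: because $I_d \in \mathcal{O}(d)$ is a feasible point for the outer minimization,
\[
PW_2^2(\mu_X,\mu_Y) = \min_{\substack{P \in \mathcal{O}(d) \\ \Gamma \in \Pi(\bb{p},\bb{q})}} \langle C_P(X,Y), \Gamma\rangle_F \;\leq\; \min_{\Gamma \in \Pi(\bb{p},\bb{q})} \langle C_{I_d}(X,Y), \Gamma\rangle_F = W_2^2(\mu_X,\mu_Y),
\]
and taking square roots (both quantities being nonnegative) yields the stated inequality.

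There is no genuine obstacle here; the only subtlety worth spelling out is that working inside $\mathcal{M}_d$ pins down a common dimension $d$ for the two measures, which is exactly what makes $W_2(\mu_X,\mu_Y)$ well-defined in the first place and the identity transformation an admissible choice of $P$. I would also append the short remark that the inequality is generally strict, namely whenever a nontrivial rotation or reflection brings the support of $Y$ into better alignment with that of $X$, since that gap is precisely the motivation for introducing the PW distance.
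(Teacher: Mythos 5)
Your proposal is correct and uses the same idea as the paper's proof: evaluate the outer minimization over $\mathcal{O}(d)$ at the feasible point $P = I_d$, which recovers the ordinary Wasserstein cost, and conclude by monotonicity of the square root. The paper phrases this directly at the level of $W_2(\mu_X,\mu_{YP})$ rather than through the Kantorovich objective, but the argument is identical.
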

\begin{proof}
It suffices to note that
\begin{align*}
PW_2(\mu_X, \mu_Y):&=\min_{P\in\mathcal{O}(d)}W_2( \mu_X, \mu_{YP}) \\
&\leq W_2(\mu_X, \mu_{YI_d}) = W_2(\mu_X, \mu_Y).
\end{align*}
\end{proof}

\begin{algorithm}[tb]
\caption{PW problem}
\label{Alg: PW-distance}
\begin{algorithmic}[1]
    \STATE {\bfseries Input:} Locations and histograms $(X, \bb{p})$, $(Y, \bb{q})$; initial correspondences $\Gamma_0$.
    \STATE \%\% \emph{Initialization}
    \STATE $U \Sigma V^T \gets \text{SVD}(Y^T \Gamma_0^T X)$
    \STATE $P_0 \gets UV^T$, $P \gets P_0$
    \WHILE{not converged}
        \STATE $C_P \gets \text{cost}(X, YP)$
        \\
        \STATE \%\% \emph{Update matching}
        \STATE $\Gamma \gets \text{EMD}(C, \bb{p}, \bb{q})$
         \%\% \emph{Earth  Mover Distance}\\
        \STATE \%\% \emph{Update} $P$
        \STATE $U \Sigma V^T \gets \text{SVD}(Y^T \Gamma^T X)$
        \STATE $P \gets UV^T$
    \ENDWHILE
    \STATE{\bfseries Return:} $\Gamma^*, P^*$
\end{algorithmic}
\end{algorithm}

\section{Procrustes-Wasserstein barycenter(s)}
\label{Sec: barycenters}

Now that we established that $PW_2$ is a distance on $\mathcal{M}_d /\sim$, consider $r$ empirical measures measures $\mu_{X_1}, \dots, \mu_{X_r}$, in $\mathcal{M}_d$, with supports $\{X_j\}_{j=1}^r$ and probability vectors $\{\bb{p}_j\}_{j=1}^r$.
We look for a barycenter $\mu_X$ with unknown support $X\in \mathbb{R}^{n \times d}$ and weights $\bb{p}$ given by the solution to the following problem
\begin{equation}
    \label{Eq: PW barycenter}
    f(\bb{p}, X) := \frac{1}{r} \sum_{j=1}^r PW_2^2(\mu_X, \mu_{X_j}).
\end{equation}
In a general setting, we might consider positive weights $\lambda_j$ associated with each measure $\mu_{X_j}$, with $\boldsymbol{\lambda}:=(\lambda_1, \dots, \lambda_r) \in \Sigma_r $. For simplicity, we present the case $\lambda_j = \frac{1}{r}$.

\subsection{Differentiability of $f(\bb{p}, X)$ with respect to $X$}

In this section we assume that $\bb{p}$ is known. Let $X\in \mathbb{R}^{n\times d}$ and $Y\in \mathbb{R}^{m\times d}$. Consider the transport cost as a function of $X$ as outlined in Equation~\eqref{Eq: QF}. The minimization of $PW_2^2(\mu_X, \mu_Y)$ with respect to $X$ can be developed as

\begin{equation}
\label{Eq: Minimization wrt X}
\begin{aligned}
    \min_{X} PW_2^2(\mu_X, \mu_Y) & = \min_{X} \min_{P,\ \Gamma}\ \langle C_P(X,Y), \Gamma \rangle_F \\
    & = \min_{X} \big( \langle \bb{u}, \bb{p} \rangle + 2\min_{P,\ \Gamma}\ \langle -X, \Gamma Y P \rangle_F \big),
\end{aligned}
\end{equation}
where constant terms in $Y$ and $\bb{q}$ are discarded. While the first term is a convex quadratic function of $X$ 
(since $u_i = \lVert x_i \rVert_2^2$), the second term renders the optimisation of $PW_2^2(\mu_X, \mu_Y)$ with respect to $X$ non-convex.
Thus, the best we can do is to look for local minima via  Newton-Raphson. Denote  by $(P^\ast, \Gamma^\ast)$ the optimal alignment and transport plan for $PW_2^2(\mu_X, \mu_Y)$. Calling $g(X)$ the objective function in Eq.~\eqref{Eq: Minimization wrt X}

\[
g(X) := \langle \bb{u}, \bb{p} \rangle - 2 \langle X,  \Gamma^\ast Y P^\ast \rangle_F,
\]
the gradient and the Hessian of $g(\cdot)$ with respect to $X$ are
\[
\nabla_X g = 2\text{diag}(\bb{p})X-2 \Gamma^\ast Y P^\ast,
\]
and 
\[
H_Xg = 2\text{diag}(\bb{p}).
\]
Thus, the update of $X$ reads
\begin{equation}
\label{Eq: barycenter update}
    \begin{aligned}
        X^{(k+1)} & = X^{(k)} - \underbrace{(H_Xg(X^{(k)}))^{-1}\cdot \nabla_Xg(X^{(k)})}_{\text{Newton step}} \\
        & = X^{(k)} - (X^{(k)} - \text{diag}(\bb{p}^{-1})\Gamma^\ast Y P^\ast )\\
        & = \text{diag}(\bb{p}^{-1})\Gamma^\ast Y P^\ast.
    \end{aligned}
\end{equation}
The update formula provides a meaningful geometric interpretation. The matrix $\text{diag}(\bb{p}^{-1})\Gamma^\ast$, whose $n$ rows belong to the simplex $\Sigma_m$, computes weighted barycenters of points in $Y$, with weights defined by the optimal transport plan. This is analogous to the Wasserstein barycenter update \cite{cuturi2014fast}, where each point in $Y$ contributes to the updated locations in $X$ proportionally to $\Gamma^\ast$. However, in the PW framework, the additional right multiplication by $P^\ast$ allows for a simultaneous optimal alignment of the barycenter.

The steps to optimize $f(\bb{p}, X)$ with respect to the locations $X$ are outlined in Algorithm~\ref{Alg: PW-barycenter}. Solving Problem~\eqref{Eq: PW barycenter} involves computing $r$ \emph{independent} PW distances between the barycenter ($\mu_X$) and the measures $\mu_{X_j}$. Thus, the first step (lines 4-5) consists into solving all $PW_2^2(\mu_X, \mu_{X_j})$ and finding $r$ solutions $(\Gamma_j^\ast, P^\ast_j)$ following the iterative scheme introduced in \citep{grave2019unsupervised} that we report here in Algorithm~\ref{Alg: PW-distance} for completeness.
The second step (line 7) updates the locations of the barycenter using the update formula in Eq.~\eqref{Eq: barycenter update}.

\begin{algorithm}[tb]
\caption{Procrustes-Wasserstein barycenter (PWB)}
\label{Alg: PW-barycenter}
\begin{algorithmic}[1]
    \STATE {\bfseries Input:} Locations $X_j\in\mathbb{R}^{n_j\times d}$ and histograms $\bb{p}_j\in\mathbb{R}^{n_j}$ for $j=1,\dots,r$; initial barycenter locations $X_0$; barycenter histogram $\bb{p}$
    \STATE $X = X_0$
    \WHILE{not converged}
            \FOR{$j\in (1, \dots, r)$}
                \STATE $(\Gamma_j^\ast, P_j^\ast) \gets PW_2\big(X,\bb{p};\ X_i, \bb{a}_j\big)$ 
            \ENDFOR
            \STATE $X = X + \frac{1}{r}\bigg( \sum_{i=1}^r \Gamma_i^\ast X_i P_i^\ast  \bigg)\cdot \text{diag}(\bb{p}^{-1})$
    \ENDWHILE
    \STATE {\bfseries Return:} $X^*$
\end{algorithmic}
\end{algorithm}

\subsection{Differentiability of $f(\bb{p}, X)$ with respect to $\bb{p}$}

Despite the obvious difference between the minimisation problem in Eq.~\eqref{Eq: PW barycenter} and its Wasserstein counterpart illustrated in~\citet{cuturi2014fast}, it can be observed that
\begin{equation*}
\begin{aligned}
    f(\bb{p}, X) & = \frac{1}{r} \sum_{j=1}^r PW_2^2(\mu_X, \mu_{X_j}) \\
    & = \frac{1}{r} \sum_{j=1}^r W_2^2(\mu_X, \mu_{X_j P^*_j}).
\end{aligned}
\end{equation*}
where, $P^*_j$ is the optimal isometry aligning $\mu_{X_j}$ with the barycenter. 
Denoting $\hat{X}_j := X_j P^*_j$, the analogy with the Wasserstein dual LP formulation is straightforward
\begin{equation}
\label{Eq: dual PW}
    \max_{\alpha_j, \beta_j}\ \langle \alpha_j, \bb{p} \rangle + \langle \beta_j,  \bb{p_j} \rangle,
\end{equation}
where in the PW framework the couplings $(\alpha_j, \beta_j)$ must satisfy
\[
\alpha_{j, i} + \beta_{j, k} \le (C_{P^*_j})_{ik} = \Vert x_i - \hat{x}_{j, k}\Vert^2,
\]
where $(C_{P^*_j})$ is the cost matrix incorporating the orthogonal alignment and $\hat{x}_{j,k}$ denotes here the $k$-th row of $\hat{X_j}$. 
Eq.~\eqref{Eq: dual PW} is a linear programming (LP) problem for each $j$, with constraints defined by $(C_{P^*_j})$.
The optimization of $f(\bb{p}, X)$ with respect to $\bb{p}$ can be approached analogously to \citet{cuturi2014fast}, leveraging the solutions of the dual problems, e.g. $\boldsymbol{\alpha} := \frac{1}{r}\sum_{j=1}^r \alpha^\ast_j$. For completeness, we provide Algorithm~\ref{Alg: p optimization} in the supplementary material, detailing the procedure for the optimization with respect to  $\bb{p}$. 

When pursuing the joint optimization of Eq.~\eqref{Eq: PW barycenter} with respect to $(\bb{p}, X)$, the outlined strategy remains the one presented in Algorithm~\ref{Alg: PW-barycenter}, except for an additional equation after line 7 updating the weights $\bb{p}$ according to Algorithm~\ref{Alg: p optimization}.

\section{Experiments}
\label{Sec: Experiments}

All the point clouds considered in this section are assumed to be centered at they Euclidean barycenter and scaled in such a way to be enclosed the 1D or 2D unit ball. We leave for future works extensions of the PW framework accounting for translations and scaling. 
The OT solvers used in the implementations are based on the POT toolbox \cite{flamary2021pot}. The code is available at \url{https://github.com/DavideAdamo98/PW-bary}.

\subsection{Initialization for point cloud matching}
\label{Subsec: Initialization}

\begin{figure}[t!]
\vskip 0.2in
\begin{center}
\subfloat{\label{Fig: Example init 2D}
    \includegraphics[width=\columnwidth]{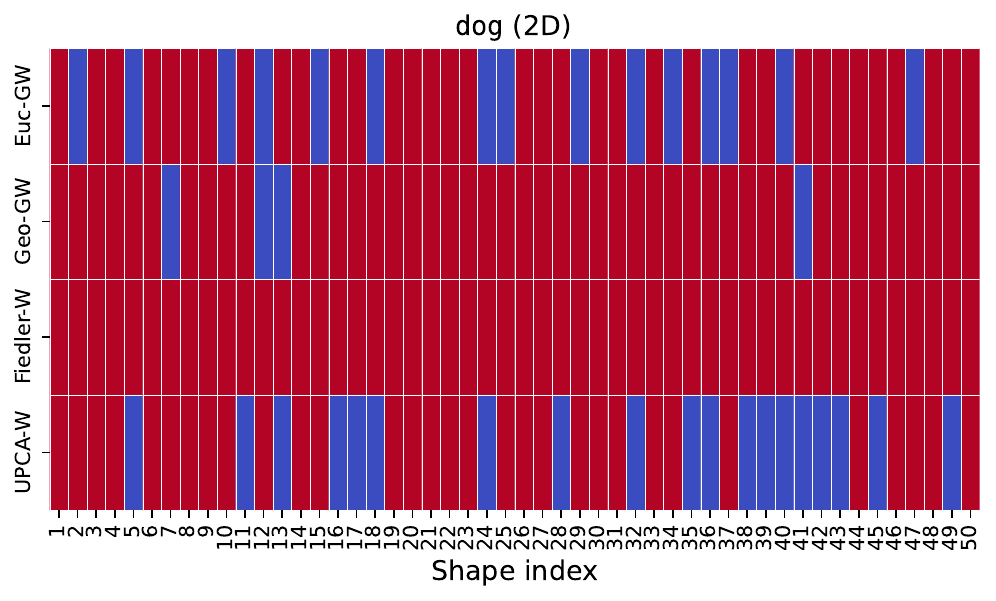}}
    \\
\subfloat{\label{Fig: Example init 3D}
    \includegraphics[width=\columnwidth]{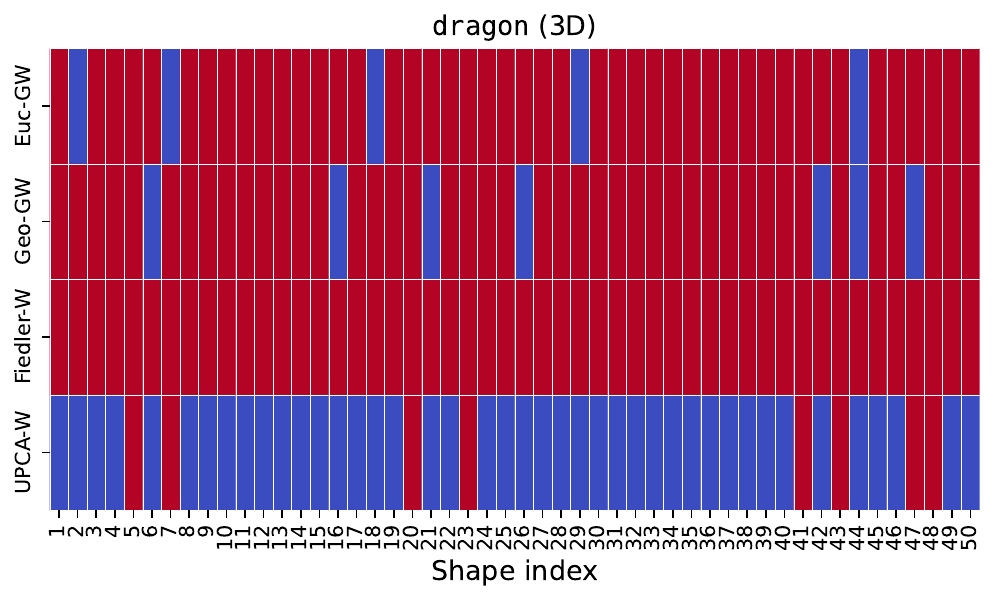}}
\caption{Convergence comparison between different initialization approaches across randomly generated shapes. Each row corresponds to a different initialization method while each column corresponds to a run of the Algorithm~\ref{Alg: PW-distance} with a different shape. Red cells indicates successful convergence of the matching (in terms of rotation/reflection and couplings), while blue cells denotes failure.}
\label{Fig: PW init example}
\end{center}
\vskip -0.2in
\end{figure}

It is well known that a primary challenge in the computation of PW lies in its initialization~\citep{grave2019unsupervised, alvarez2019towards}. 
In this section,  we inspect several initialization strategies of $\Gamma_0$ (Algorithm~\ref{Alg: PW-distance}) for PW in the context of 2D/3D point cloud matching.

Let us consider a pivot measure $\mu_{X_1}$, either representing a 2 or 3-dimensional point cloud.
Since it is assumed that each point is equipped with the same (uniform) probability mass, with a sligth abuse of notation we identify $\mu_{X_1}$ with $X_1$.
We generate 50 clouds by randomly adding extra vertices, Gaussian noise and vertex permutation to $X_1$. We also include a random rotation and reflection. We thus generate $X^i_2$ for $i=1,\dots,50$ that underline the same geomertric structure of $X_1$ (e.g. they represent a perturbed versions of the pivot). We look for a pairwise clouds registration, in terms of global alignment and couplings. We test different approaches, with the objective to compute $\Gamma_0$. \\
\begin{enumerate}
    \item \textbf{Euc-GW.} Gromov-Wasserstein based on Euclidean pairwise distances is computed for each pair of point clouds and $\Gamma_0$ is set equal to the optimal GW plan.
    \item \textbf{Geo-GW.} Same as before but with geodesic pairwise distance in place of the Euclidean.
    \item \textbf{Fiedler-W.} Fiedler vector~\citep{fiedler1973algebraic} is the eigenvector associated with the algebraic connectivity (i.e. the second-smallest eigenvalue) of the Laplacian matrix of a connected graph. Since point clouds can be easily transformed into graphs \citep{cover1967nearest, preparata2012computational} $\mathcal{G}$. We propose to resort to a Wasserstein matching between Fiedler vectors to initialise $\Gamma_0$. More specifically, for a fixed $i$, we compute the Fiedler vectors of $X_1$ and $X^i_2$, denoted as $f_1$ and $f^i_2$, respectively, and we standardize them. Furthermore, we compute both the Wasserstein distance between $(f_1,f^i_2)$ and $(f_1,-f^i_2)$ (to account for the vectors orientation). The transport plan yielding the smaller distance determines $\Gamma_0$.
    \item \textbf{UPCA-W}.Given two point clouds, $X$ and $Y$, the first step involves computing the eigenvector matrices $Q_X$ and $Q_Y$, of their covariance matrices. The multiplication $X Q_X$ (resp. $Y Q_Y$) leads to a matrix $X'$ (respectively $Y'$) that is uncorrelated, e.g. the principal axes of $X'$ and $Y'$ correspond, up to the directions, to the standard coordinate axes of the $d$-dimensional Euclidean space. Moreover, fixing $X$, the matrix $Y Q_X^T Q_Y$ brings $Y$ into the same (principal component) basis as $X$, once more up to the direction of the axes. At this stage, a Wasserstein matching can be performed between $X$ and $Y Q_X^T Q_Y$. In the case of $d=2$, there are $2^2$ possible combinations of directions to check, requiring the resolution of four independent Wasserstein problems. Similarly, for $d=3$, we must solve $2^3$ Wasserstein problems. As with Fiedler-W, the transport plan associated to the smallest distance defines the initialization $\Gamma_0$.
\end{enumerate}

Convergence results of the Algorithm~\ref{Alg: PW-distance} for the four presented initialization techniques are summarized in Figure~\ref{Fig: PW init example}. Red colour for the cells denotes convergence to the global minimum (matching succeeded) while blue colour denotes failure (convergence to local minima). 
We observe that GW initializations generally lead to a good success rate. However, despite their invariance to isometries, there are instances where the GW transport plan fails to establish the correct couplings. In cases where the data underline specific geometric structure GW could reveal optimal, however its computational cost makes it use clearly prohibitive when working with larger point clouds. 
In contrast, the Fiedler-W initialization consistently ensures robust convergence. In the tested scenarios, the Fiedler vectors prove to be optimal for capturing the geometry of the data. 
Finally, in the two cases, UPCA-W does not demonstrate effectiveness, particularly in the 3D case.
We leave a further investigation of this approach for future works.
Additional results and visualisations are available in Supplementary Material~\ref{app:init}.

\subsection{Clustering}
\label{Subsec: clustering}

\begin{figure*}[t!]
\vskip 0.2in
\begin{center}
\subfloat{\label{Fig: mnist images}
    \includegraphics[width=4cm]{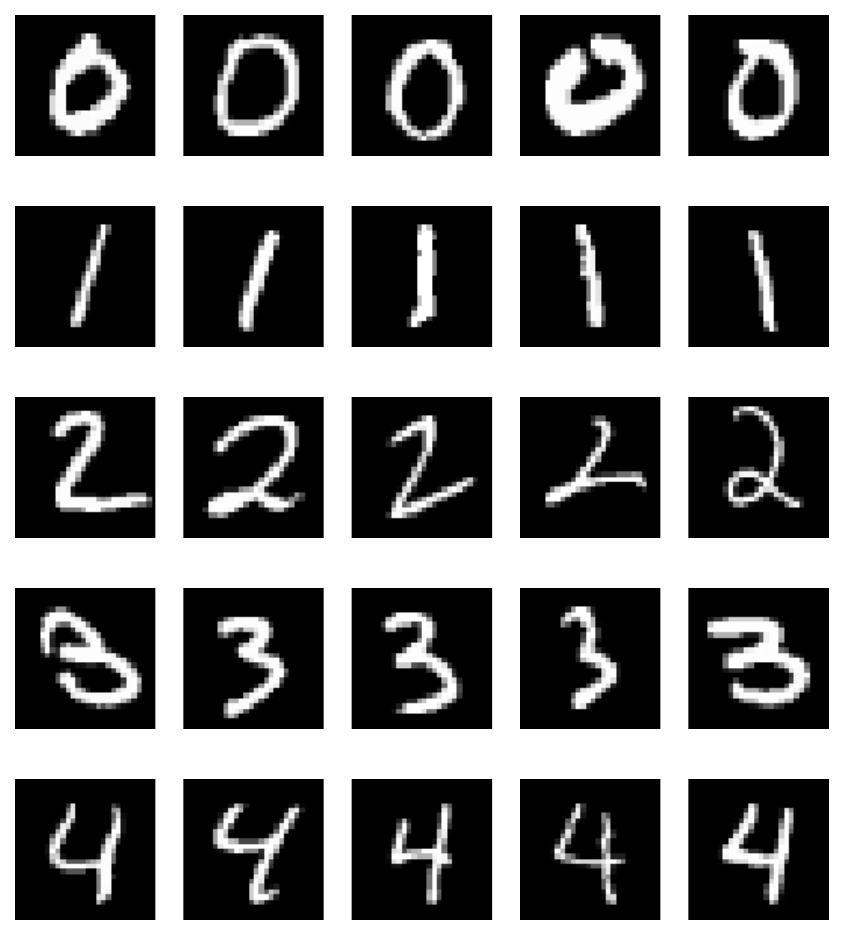}}
    \quad
\subfloat{\label{Fig: mnist clouds}
    \includegraphics[width=3.8cm]{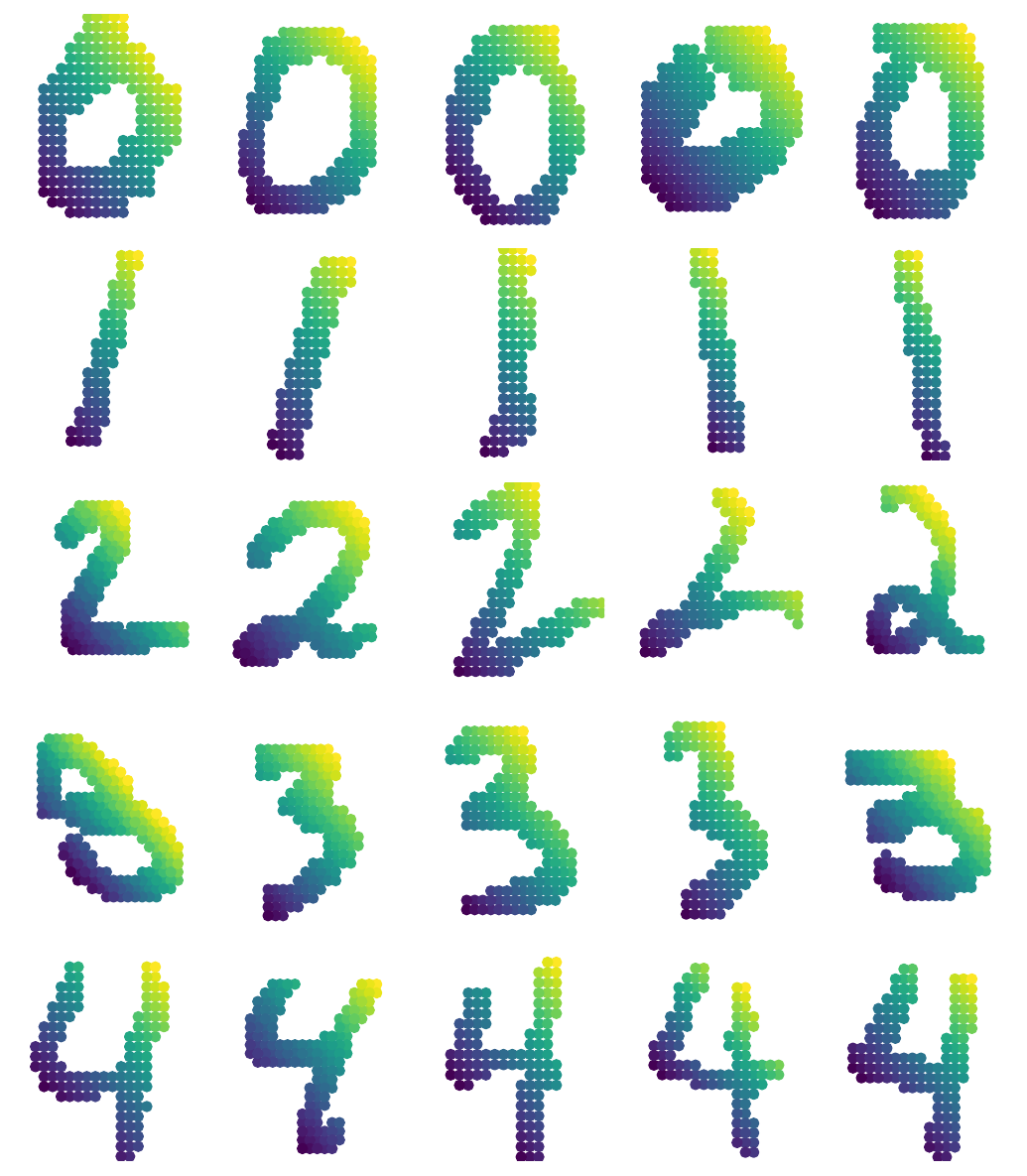}}
    \quad
\subfloat{\label{Fig: mnist centroids}
    \includegraphics[width=2.8cm]{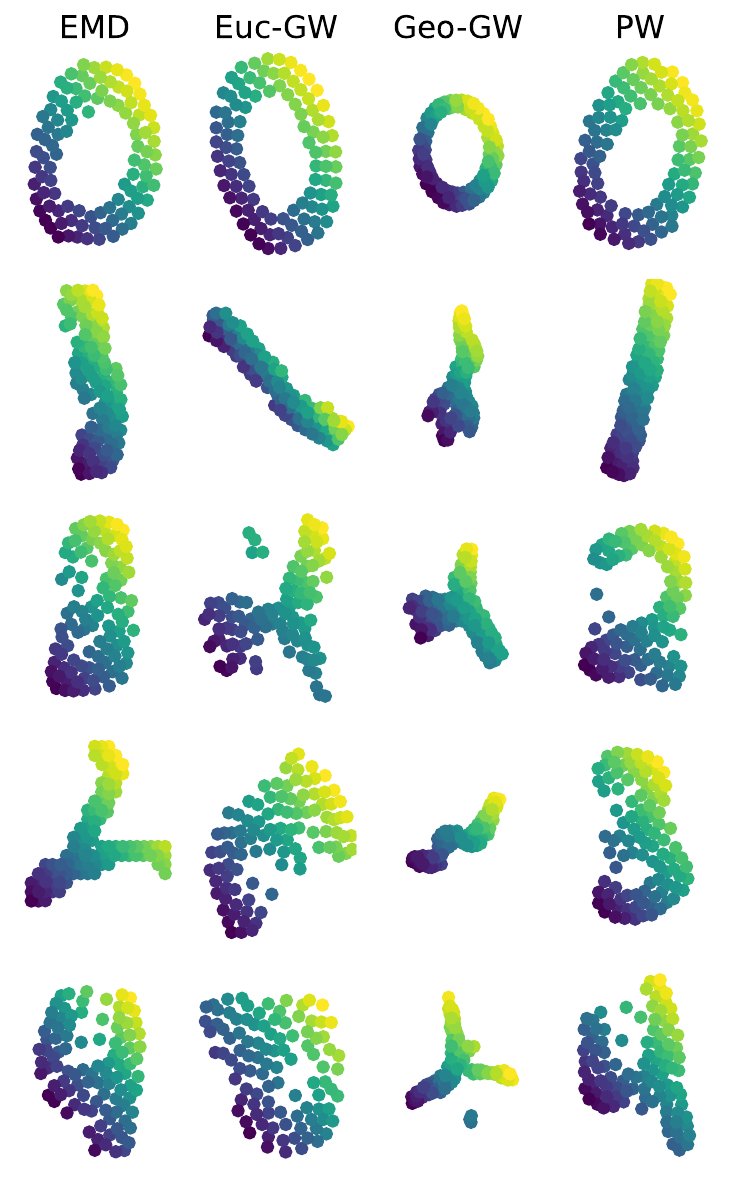}}
    \quad
\subfloat{\label{Fig: mnist cm}
    \includegraphics[width=5cm]{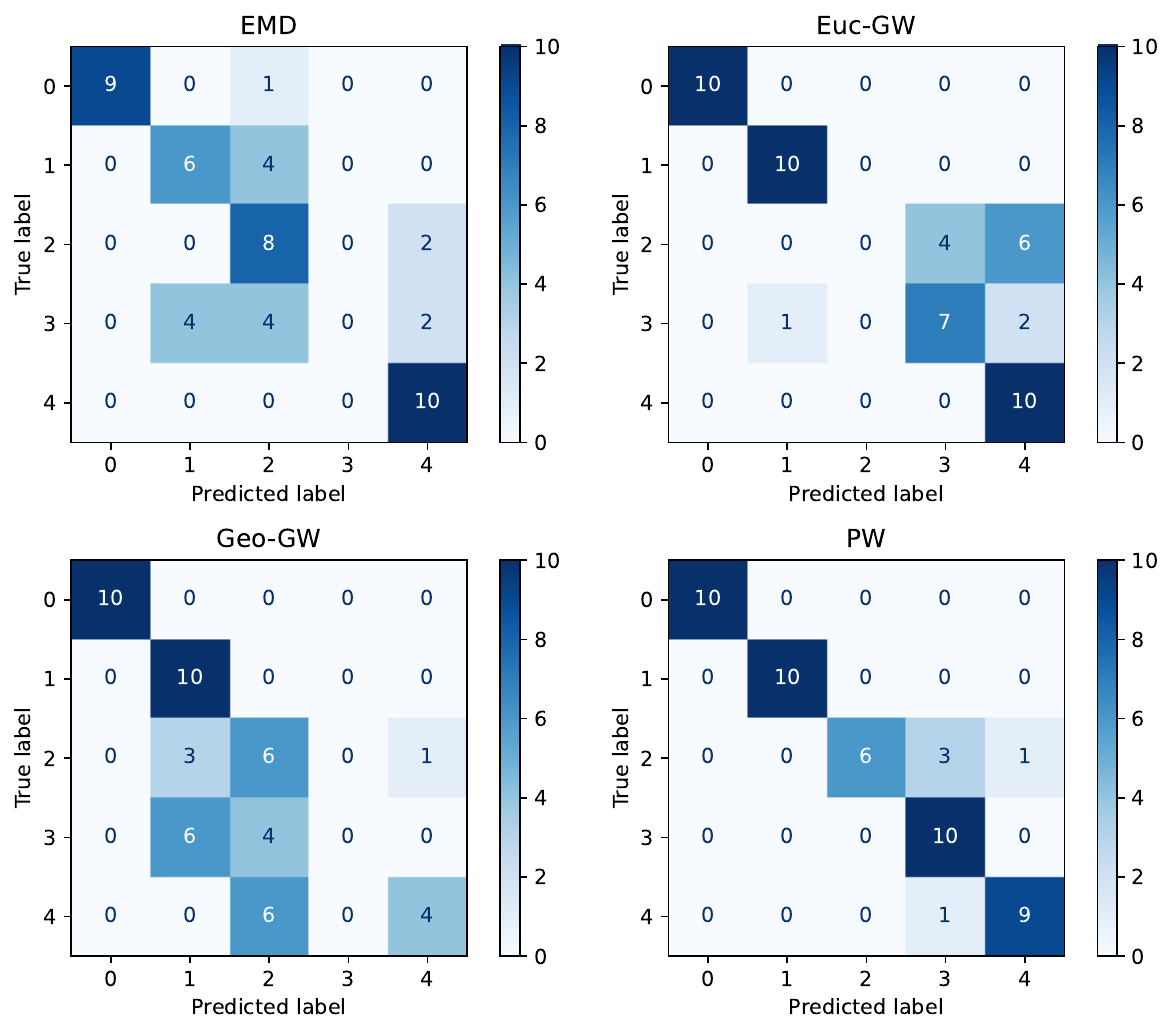}}
\caption{Clustering \textit{k}-means of MNIST dataset. (Leftmost) Subset of considered images. (Center-left) Corresponding 2D point clouds representation. (Center-right) Clustering centroids computed with different OT barycenters. (Rightmost) Confusion matrices. Rows correspond to the digits, while columns correspond to the clusters. The colour is proportional to the number of digits to each cluster.}
\label{Fig: Example3}
\end{center}
\vskip -0.2in
\end{figure*}

In this section, we propose an unsupervised application of PW for performing clustering directly in the space of point clouds. Our approach draws inspiration from the \emph{k}-means reformulation presented in \citet{peyre2016gromov} with Gromov-Wasserstein barycenters. We consider the MNIST dataset of handwritten digits with specific focus on the first five digits, from 0 to 4 (Figure~\ref{Fig: Example3}, left). For each digit class, we consider 10 images and convert them into 2-dimensional point clouds (Figure~\ref{Fig: Example3}, center). This results in a dataset of 50 point clouds, which we aim to cluster with respect to the digit class (thus $k=5$). Differently from \citet{peyre2016gromov}, we avoid applying random rotations to the dataset to highlight some benefits of PW even in scenarios where the input data are already aligned (at least in terms of reflection and rotation).

To initialize the centroids, we adopt a strategy inspired by \emph{k}-means++ as follows. We randomly select one point cloud from the 50 and label it as the first ``candidate." The first centroid is determined by applying Euclidean \emph{k}-means clustering to the candidate cloud, where the number of clusters equals the number of points specified for the OT centroids (PW barycenters). This ensures that the points sampled from the candidate form a uniform representation. Next, we identify the point cloud among the remaining 49 that is the farthest from the first candidate, based on the PW distance. This farthest point cloud becomes the second ``candidate", and its centroid is computed using the same idea as for the first.
By iterating this process: select the point cloud that is farthest from all previously selected candidates and compute its centroid, we obtain an initial configuration of five centroids. This approach ensures a well-distributed initialization with respect to the PW distance. \\
Using the same initialization technique, we compare \emph{k}-means clustering across different OT metrics.
Specifically, we present comparisons between discrete Wasserstein (Earth Mover's Distance, EMD), Gromov-Wasserstein with Euclidean distances (Euc-GW), with geodesic distances (Geo-GW) and PW with a Wasserstein initialization to establish initial correspondences.

\begin{table}[t]
\caption{Clustering results for MNIST dataset.}
\label{Tab: clustering}
\vskip 0.15in
\begin{center}
\begin{small}
\begin{sc}
\begin{tabular}{lccr}
\toprule
Algorithm & Time (s) & ARI & NMI \\
\midrule
EMD    & \textbf{9.18}   & 0.4069 & 0.5652 \\
Euc-GW & 675.19 & 0.5500 & 0.6815 \\
Geo-GW & 378.82 & 0.3797 & 0.5724 \\
PW     & 130.11 & \textbf{0.7669} & \textbf{0.8361} \\
\bottomrule
\end{tabular}
\end{sc}
\end{small}
\end{center}
\vskip -0.1in
\end{table}
The clustering results are reported in Table~\ref{Tab: clustering}, where we provide the computational time (in seconds), the adjusted rand index (ARI) and the normalized mutual info score (NMI) for each of the presented approaches. We also provide in Figure~\ref{Fig: Example3} (Center-right) and Figure~\ref{Fig: Example3} (Rightmost) the estimated centroids (OT barycenters) and the confusion matrices, respectively.
From the results, we observe that the PW-based clustering provides the best performances, in terms of ARI and NMI. Moreover, clustering results optimal for the digits 0, 1, and 3. Consistent with findings from \citet{peyre2016gromov}, the digits where clustering is less effective are 2 and 4, reflecting greater variability in handwritten style. Differently form the GW-based clustering, PW-based successfully returns more representative centroids for all the five considered digits. EMD-based clustering proves to perform well for certain digits. However, it consistently fails to identify digit 3. The superior performance of PW over EMD underscores the significance of incorporating optimal rotations, even in scenarios where the input data are already aligned, at least in the sense that poses is consistent.

\section{Application: tracking the morphological evolution of domestic animals}
\label{Sec: archaeology}
\begin{figure*}[t!]
\vskip 0.2in
\begin{center}
\includegraphics[width=15cm]{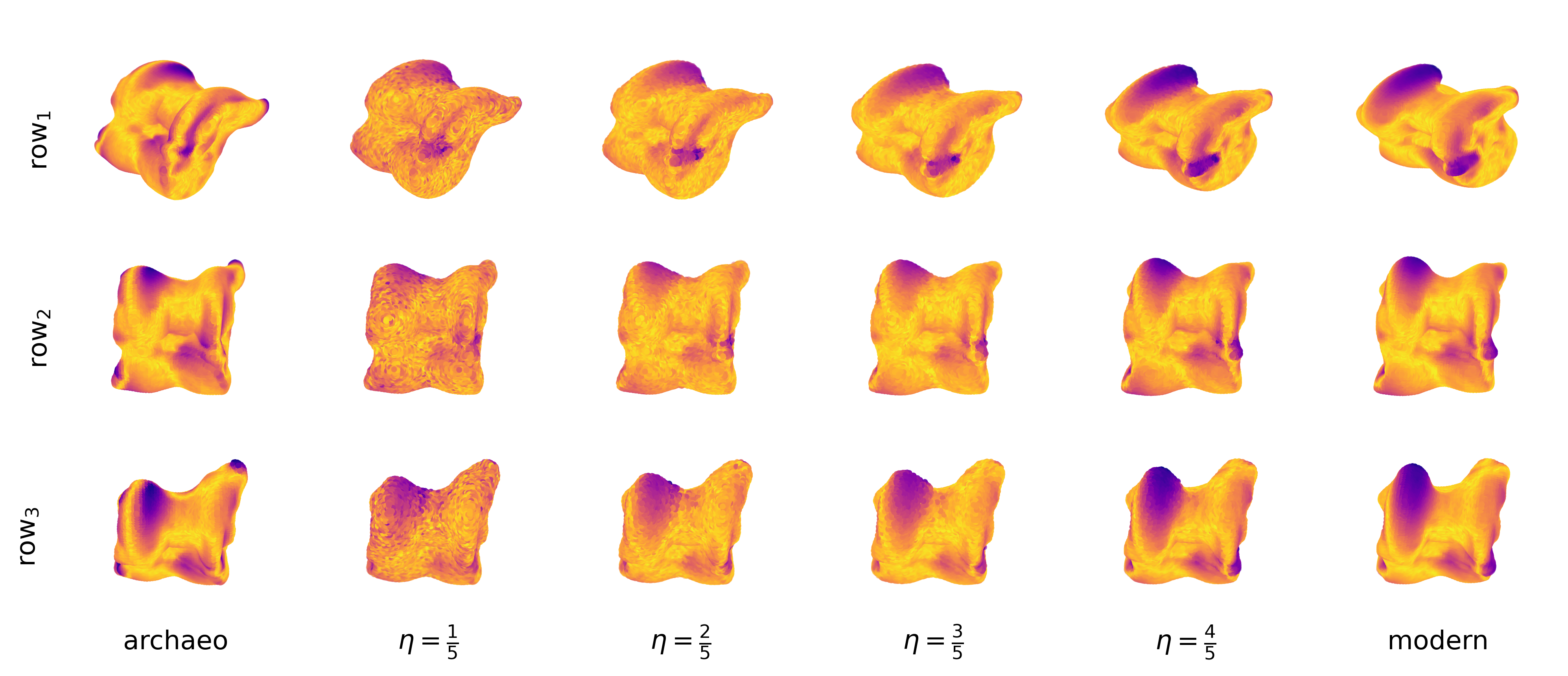}
\caption{PW barycenter evolution of two 3D point clouds describing an archaeological (Leftmost) and a modern (Rightmost) astragalus of sheep’s species. The four middle columns of the grid correspond to representative interpolations each assigned with a value of $\eta$. ($\text{row}_1$) Progressive interpolation in the euclidean space, note that the two input point clouds are not aligned and no priori knowledge on pairwise correspondence is considered. The $P^\ast$ solution of PW permits us to optimally display the frontal view ($\text{row}_2$) and top view ($\text{row}_3$), in order to match reference manuals of morphological criteria in archaeology.}
\label{Fig: Application}
\end{center}
\vskip -0.2in
\end{figure*}

The breeding of domestic ungulates began over 9500 years ago, leading to considerable phenotypic and genetic changes, adapted to the socio-economic and cultural requirements of human societies. In south-west Asia, from the end of the Bronze Age onwards, archaeozoological \citep{vila2014expansion, vila2021evosheep, abrahami2023evosheep} and palaeogenetic data \citep{her2022broad} indicate that zootechnical practices were used for the management and selection of sheep morphotypes. This led to a significant increase in phenotypic diversity and a decrease in genetic diversity. While the morphological changes observed are relatively well documented by palaeogenetic data, identifying the processes linked to morphological transformations in the bones of this species remains complex: which parts of the bone are modified (anatomical characteristics)? How do they change (bone plasticity)? Why do they change (morpho-functional adaptations linked to anthropic and environmental factors)?
To track these morphological changes, traditionally archaeozoologists rely on visual comparisons and manual measurements. These methods can be time-consuming and subject to interpretational bias, especially when dealing with intra-specific variations. With the advent of 3D scanning technologies, bones can now be digitized and represented as point clouds or meshes, opening new avenues for quantitative analysis and machine learning. In this context, OT offers a mathematically robust framework to tackle the problem of comparing and interpreting bone shapes.
The objective of this study is to highlight the morphological evolution over time, i.e. the transition from archaeological to modern, by directly comparing three-dimensional representations of the astragalus (ankle bone) for one archaeological sheep dated to the Chalcolithic period and one modern sheep from the same region, the Alborz mountain in Iran. 
With this objective, let us consider two measures $\mu_X$ and $\mu_Y$, with associated locations $X$ and $Y$ of nearly 10k vertices, representing an archaeological and a modern bone structure of the sheep species, respectively. By assigning weights $\lambda_X$ and $\lambda_Y$ respectively, we seek for a 10k PW barycenter via Algorithm~\ref{Alg: PW-barycenter}, that defines an interpolation between the two bone’s structures. Set $\eta \in [0,1]$ and re-write $\lambda_X=1–\eta$ and $\lambda_Y=\eta$. By varying $\eta$ we can iterate the minimization problem~\eqref{Eq: PW barycenter} and thus model the intermediate stages of morphological changes between the two bones, enabling the study of evolutionary trajectories and species transformations over “time”.
In order to create a pipeline that is as robust and accurate as possible,
\textit{a priori} step in this methodology is the normalization of the data. To ensure a meaningful comparison and avoid any bias, we resort to a volume-based normalization which consists in two key steps. First, we set to the origin the volumetric center of mass. Second, we constrain the shape to have a unit volume. This technique allows the model to compute interpolations that best capture morphological changes and are less influenced by overall distortions. We remark that the purpose of this section is not in comparing different types of normalization, however, different pre-processing techniques can be tested.
Figure~\ref{Fig: Application} shows the evolution of the bone structure from archaeological to modern, by means of PW barycenters. In $\text{row}_1$ is reported the progressive interpolations in the 3D space. We can see that the four barycenters, corresponding to four distinct values of $\eta$, are well representative of the input point clouds. The colouring of the bones reflect the point-wise similarity between each barycenter and the modern sheep (the rightmost bone) form the archaeological sheep (the leftmost bone). In yellow are outlined the parts of the bone that are more similar, while in blue the parts that different the most. The colour of the archaeological bone, on the other hand, reflects the distance between itself and the modern bone. \\
As expected, we see that the first barycenter is the closest to the archaeological bone. As we get closer to the modern sheep, the PW distance increases and the blue areas become more pronounced. By exploiting the solution of the PW barycenter problem, we benefit of a complete registration of the barycenters, we can thus visualize different views: the dorsal view ($\text{row}_2$) and the proximal view ($\text{row}_3$). These orientations facilitate the observation of changes in the overall proportions of the bone and more targeted changes, particularly to the proximal trochlea, i.e. the upper pulley-shaped articular surface. A notable observation is the widening of the lateral lip in the proximal trochlea of the modern specimen in comparison to the archaeological specimen. The proximal view also demonstrates a narrowing of the tuberculus tali and a development of the projecting medial ridge in the modern specimen compared to the archaeological specimen. For a better understanding of the bone anatomical features is provided in Figure~\ref{Fig: Bone description} in supplementary material.

\paragraph{Discussion.} The proposed approach allows us to trace the evolutionary trajectories of species by interpolating between bone shapes. This method provides archaeozoologists with a powerful quantitative tool to infer how species adapted, evolved, or were selectively bred by humans. Furthermore, the same technique could be used to compute a “mean” representative bone shape for species for which morphological criteria are not well-defined. 
This would reinforce and supplement studies combining machine learning and archaeozoology to identify morphologically related taxa \citep{miele2020deep, moclan2023machine, vuillien2025topological}.
In conclusion, by directly working on 3D models, this approach offers a robust solution that aim to avoid the subjectivity inherent in traditional morphological analysis. The use of the PW distance and PW barycenters introduce a rigorous and quantitative framework for analyzing shapes while offering to archaeologists a detailed and objective tool for interpreting species evolution, domestication patterns, and morphological diversity, ultimately enhancing our understanding of the past.

\section{Conclusions}
In this paper, we carefully defined a space of discrete probability measures over which Procrustes-Wasserstein is a distance and provided a formal proof of such claim. This opens the door to a wider application of PW in various machine learning tasks, particularly when dealing with complex data structures.
We also introduced PW barycenters extending the literature of OT barycenters. Our formulation enables the construction of representative measures that exhibit an improved visual loyalty to the geometry of the observed data. We propose applications that demonstrated the properties and advantages of our approach, with comparisons with state-of-the-art methods. Future works could explore denser formulations of the barycenter problem (via entropic regularisation) leading to smoother solutions and broader applicability to large-scale datasets.

\section*{Acknowledgements}

The work has received financial support from the CNRS through the MITI interdisciplinary programs and the Junior Professor Chair (Chaire de Professeur Junior, CPJ) funded by the French National Research Agency (ANR). We would like to thank Arch’AI’Story project (Ministère de l’Enseignement Supérieur et de la Recherche and University Côte d’Azur) for funding this project. We would like to express our gratitude to Dr. Hossein Davoudi (Bioarchaeology Laboratory Central Laboratory, University of Tehran, Iran) and Dr. Marjan Mashkour (Bioarchaeology, Interactions societies-environments laboratory-UMR 7209, CNRS, National Museum of Natural History of Paris, France) for their support and authorization to provide samples of the modern and archaeological sheep. We also wish to warmly thank Cédric Vincent-Cuaz for the enlightening discussions we had with him around this work as well as for sharing with us his point of view regarding the Procrustes-Wasserstein distance. 
Our gratitude is finally extended to the anonymous reviewers whose contributions proved to be of substantial value in enhancing the quality of the article.

\section*{Impact Statement}
This paper presents work whose goal is to advance the field of Machine Learning. There are many potential societal consequences of our work, none which we feel must be specifically highlighted here.

\bibliography{biblio}
\bibliographystyle{icml2025}

\newpage
\appendix
\onecolumn
\section{Proof of \cref{Prop:  P-W is a distance}.}\label{sec:proof_Th1}

\begin{proof}
First we check that $PW_2(\mu_{X},\mu_{Y}) = 0$ iff $\mu_{X} \sim \mu_{Y}$. The left implication is clear : if $\mu_{X} \sim \mu_{Y}$ , it means that there is $(\sigma^*, P^*)$ such that $Y = \sigma^*(X)P^*$ and $\bb{q} = \sigma^*(\bb{p})$. Then $(\sigma^*,P^*)$ is the solution of the problem in Eq.~\eqref{Eq: PW-distance}, with the Kantorovich formulation being equivalent to the Monge's one. Vice-versa, if 
\[
PW_2(\mu_{X},\mu_{Y}) = \min_{P \in \mathcal{O}(d)} W_2(\mu_X, \mu_{YP}) = 0,
\]
it means that there exists à $P^*$ such that the 2-Wasserstein distance between $\mu_X$ and $\mu_{YP^*}$ is null, or equivalently that $\mu_X$ and $\mu_{YP}$ are the same measure up to a permutation of the points in the support together with their masses.

Second we prove that $PW_2(\mu_{X},\mu_{Y}) = PW_2(\mu_{Y},\mu_{X})$ for all $\mu_X,\mu_Y$ in $\mathcal{M}_d$. Let us assume that
$(P^*,\Gamma^*)$ is solution of Problem \eqref{Eq: PW2}, with $P^{*} \in \mathcal{O}(d)$ and $\Gamma^{*} \in \Pi(\bb{p},\bb{q})$. Then $d(\mu_{Y},\mu_{X})$ is defined by the maximization over $Q \in \mathcal{O}(d)$ and $\Theta \in \Pi(\bb{q},\bb{p})$ of
\begin{equation*}
\begin{split}
\langle YQ^TX^T, \Theta \rangle_F &= \text{tr}(\Theta^TYQ^TX^T) = \text{tr}(XQY^T\Theta) \\
&= \text{tr}(\Theta XQY^T) = \langle XQY^T, \Theta^T \rangle_F \\
&\leq \langle X(P^*)^TY^T, \Gamma^* \rangle_F,
\end{split}
\end{equation*}
by optimality of $(P^*, \Gamma^*)$ and where we used that the trace of a matrix equals the trace of its transposed and the trace is invariant under cyclic permutations of its arguments. 
The above equation shows that if $(P^*, \Gamma^{*})$ is the stationary point leading to $PW_2(\mu_{X},\mu_{Y})$ then, $\left((P^*)^T(\Gamma^*)^T\right)$ is the solution leading to $PW_2(\mu_{Y},\mu_{X})$ and vice-versa. Thanks to Eq.~\eqref{Eq: QF}, it is now immediate to verify that $PW_2(\mu_{X},\mu_{Y}) = PW_2(\mu_{Y},\mu_{X})$.

Third, we show that the triangular inequality is satisfied : $PW_2(\mu_{X},\mu_{Y}) \leq PW_2(\mu_{X},\mu_{Z}) + PW_2(\mu_{Z},\mu_{Y})$, for all $\mu_{X},\mu_{Y},\mu_{Z}$ in $\mathcal{M}(d)$. For all $\mu_{Z} \in \mathcal{M}_D$ it holds that
\[
W_2(\mu_X, \mu_{YP}) \leq W_2(\mu_X, \mu_{ZP}) + W_2(\mu_{YP}, \mu_{ZP}) = W_2(\mu_X, \mu_{ZP}) + W_2(\mu_{Y}, \mu_{Z}),
\]
where the first inequality holds since the Wasserstein distance is symmetric and satisfies the triangular inequality (as any distance) and the second equality comes from the fact that if we equally rotate or reflect the supports of two measures the Euclidean distances between any pair of points in the supports will be unchanged. From the above equation, paired with Eq.~\eqref{Eq: PW} we deduce that, for any $\mu_{Z} \in \mathcal{M}_d$
\begin{equation*}
 PW_2(\mu_{X},\mu_{Y}) \leq PW_2(\mu_{X},\mu_{Z}) + W_2(\mu_Y,\mu_Z).   
\end{equation*}
Now, if we replace $Z$ with $Z^* = ZP^*$ where $P^*$ is solution of 
\[
\min_{P \in \mathcal{O}(d)} W_{2}(\mu_Y,\mu_{ZP}),
\]
we obtain
\[
PW_2(\mu_{X},\mu_{Y}) \leq PW_2(\mu_{X},\mu_{Z^*}) + W_{2}(\mu_Y,\mu_{Z^*}) = PW_2(\mu_{X},\mu_{Z^*}) + PW_2(\mu_{Y},\mu_{Z}),
\]
where the last equality holds by the definition of PW. Finally, since $Z$ and $Z^*$ only differ by right-multiplication with an orthogonal matrix, $PW_2(\mu_{X},\mu_{Z^*}) = PW_2(\mu_{X},\mu_{Z})$.
\end{proof}

\newpage
\section{Optimization of $PW_2$ with respect to $\bb{p}$}

In this section we provide for clarity the algorithm for the optimization of the weights $\bb{p}$, already proposed by \citet{cuturi2014fast} within the context of Wasserstein barycenters. In our framework we assume $\bb{p}\in\Sigma_n$ and denote $\circ$ the Schur's product.

\begin{algorithm}[htbp!]
   \caption{Optimization of $\bb{p}$}
   \label{Alg: p optimization}
\begin{algorithmic}[1]
    \STATE {\bfseries Input:} Cost matrices with orthogonal alignments $C_{P_j} \in \mathbb{R}^{n\times n_j}$ and histograms $\bb{p}_j\in\mathbb{R}^{n_j}$ for $j=1,\dots,r$
    \STATE Set $\hat{p} = \tilde{p} = \mathbbm{1}_n / n$
    \WHILE{not converged}
        \STATE $\beta = (t+1)/2$
        \STATE $\bb{p} \gets (1 - \beta^{-1}) \hat{p} + \beta^{-1} \tilde{p}$
        \STATE $\alpha \gets \frac{1}{r}\sum_{j=1}^r \alpha_j^\ast$ using all dual optima $\alpha_j^\ast$ of Eq.~\eqref{Eq: dual PW}
        \STATE $\tilde{p} \gets \tilde{p} \circ e^{-t_0 \beta \alpha}$
        \STATE $\tilde{p} \gets \tilde{p}/\tilde{p}^T \mathbbm{1}_n$
        \STATE $\hat{p} \gets (1 - \beta^{-1}) \hat{p} + \beta^{-1} \tilde{p}$
        \STATE $t \gets t+1$
    \ENDWHILE
    \STATE {\bfseries Return:} $\bb{p}$
\end{algorithmic}
\end{algorithm}

\section{Regularized PW barycenter(s)}

As common in the OT community, we extend in this section the PW barycenter problem by adding an entropic regularization. Consider the following relaxed version of the PW problem 
\begin{equation}\label{Eq: Entropic PW}
 PW_{\epsilon, 2}^2(\mu_X, \mu_{YP})  = \min_{\substack{P\in \mathcal{O}(d) \\ \Gamma \in \Pi(\bb{p},\bb{q})}} \langle C_{P}({X},{Y}), \Gamma \rangle_F
 + \epsilon H(\Gamma),
\end{equation}
where $\epsilon$ is a non-negative parameter controlling the strength of the regularization and $H(\Gamma):=\sum_{i,j} \Gamma_{ij}\log(\Gamma_{ij})$ is the entropy. 

Looking for a measure $\mu_X$ with unknown support $X\in \mathbb{R}^{n \times d}$ and weights $\bb{p}$, given $r$ measures $\mu_{X_1}, \dots, \mu_{X_r}$,  translates into the following minimization problem
\begin{equation}
    \label{Eq: Regularized PW barycenter}
    f_{\epsilon}(\bb{p}, X) := \sum_{j=1}^r \lambda_j PW_{\epsilon, 2}^2(\mu_X, \mu_{X_j}).
\end{equation}

The resolution of this problem can be done similarly as for the classical case. Given the probability vector $\bb{p}$ and assuming $(\Gamma_j^\ast, P_j^\ast)$ are the solutions of the regularized PW problem $PW_{\epsilon, 2}^2(\mu_X, \mu_{X_j})$, the Newton update is still given by Eq.~\eqref{Eq: barycenter update}.
Notably, at the optimum, both the gradient and the Hessian of Eq.~\eqref{Eq: Regularized PW barycenter} are independent of the regularization term $H(\cdot)$. 
The optimization scheme for the computation of the barycenter is equivalent to Algorithm~\ref{Alg: PW-barycenter}, where in this case we solve each PW sub-problem independently using the Sinkhorn algorithm \citet{cuturi2013sinkhorn}. Under this framework, the optimization of Eq.~\eqref{Eq: Regularized PW barycenter} can be seen as a particular case of the one discussed in \citet{alvarez2019towards}.

\begin{figure}[t!]
\begin{center}
    \includegraphics[width=\textwidth]{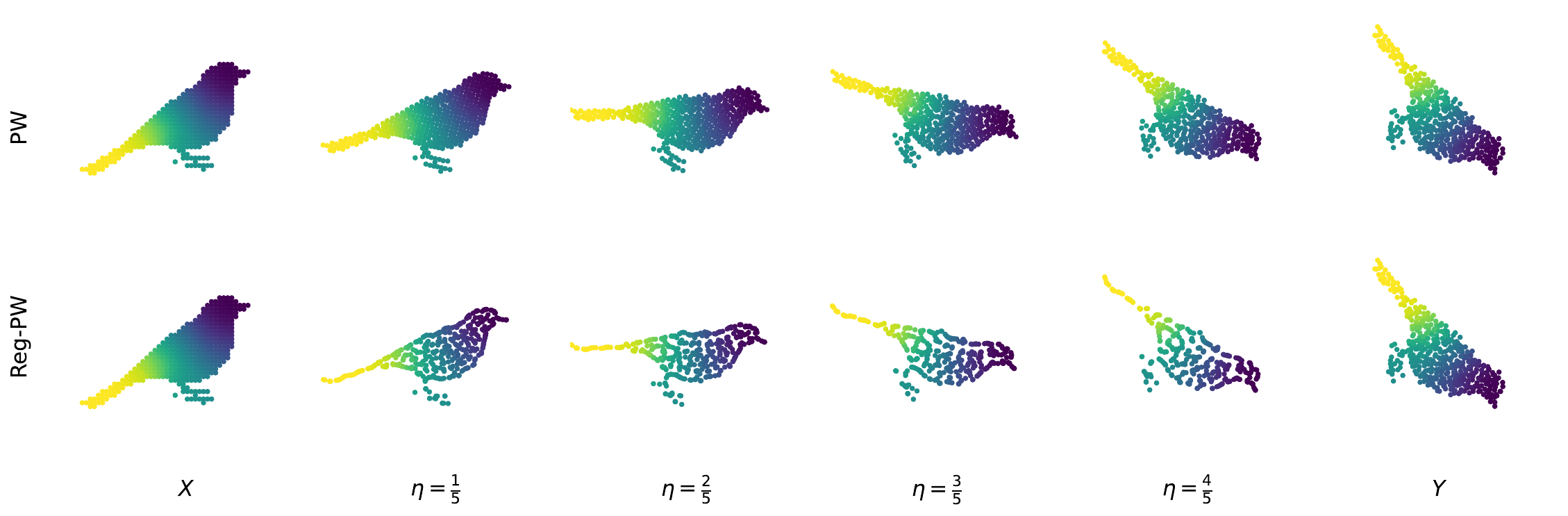}
    \caption{Progressive interpolations of two point clouds describing a bird in different positions. The four central columns of the grid correspond to PW barycenters, each reflecting a specific interpolation step (defined by $\eta$). The two rows represent barycenters calculated using the classical formulation (PW) and considering an entropic relaxation of the problem (Reg-PW).}
    \label{Fig: Entropic application}
\end{center}
\end{figure}

To illustrate the effectiveness of the proposed barycenter, we present a 2D toy example where we consider two measures $\mu_X$ and $\mu_Y$, with associated locations $X$ and $Y$, representing different instances of the ``same" object. Specifically, $X$ represents a point cloud depicting a bird in a particular pose. We define $Y$ as a modified version of $X$ with the following transformations: addition of Gaussian perturbations and random vertex permutation; addition of extra vertices; application of a random rotation. Within this setting, our goal is to generate interpolated shapes that progressively move from $X$ to $Y$. 
Set thus $\eta \in [0,1]$ and re-write $\lambda_X = 1 - \eta$ and $\lambda_Y=\eta$. By varying $\eta$ we can iterate the minimization problem and model the intermediate stages. We compute PW barycenters using both the classical problem formulation (Problem~\ref{Eq: PW barycenter}) and the regularized version (Problem~\ref{Eq: Regularized PW barycenter}).

In Figure~\ref{Fig: Entropic application} we can follow the progressive interpolations between the two point clouds. The four central columns correspond to PW barycenters at different interpolation steps, while the two rows compare the classical PW barycenter (top) with the relaxed version (bottom).
The colors of the barycenters are given by transporting the colors of $X$ via the optimal plan $\Gamma_X^\ast$. This highlights how the cloud evolves while maintaining the structural consistency of the original shape.

\newpage
\section{Additional results}\label{app:init}

\begin{figure}[htbp!]
    \centering
    \subfloat[2-dimensional target $(X_1)$]{\label{Fig: pivot 2d}
   \includegraphics[scale=.8]{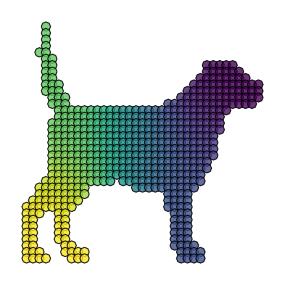}}
    \quad
    \subfloat[2-dimensonal perturbed sources $\big(X_2^i\big)$.]{\label{Fig: dataset 2d}
    \includegraphics[scale=0.47]{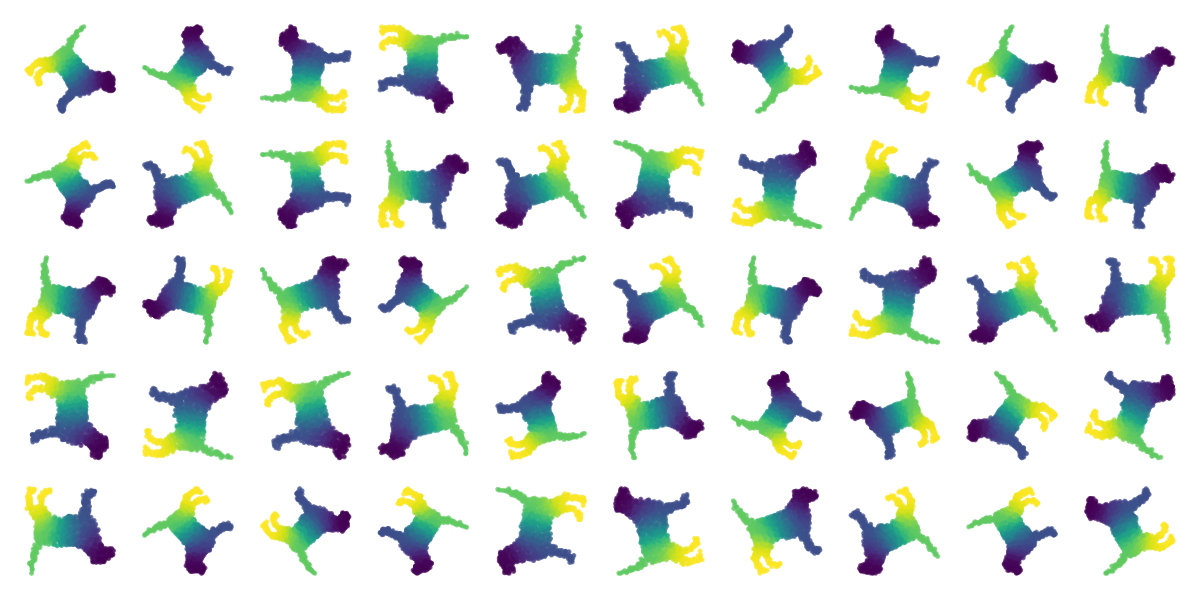}}
    \\
    \subfloat[Matching results with Euc-GW.]{\label{Fig: 2d matching GW euclidean}
    \includegraphics[scale=0.4]{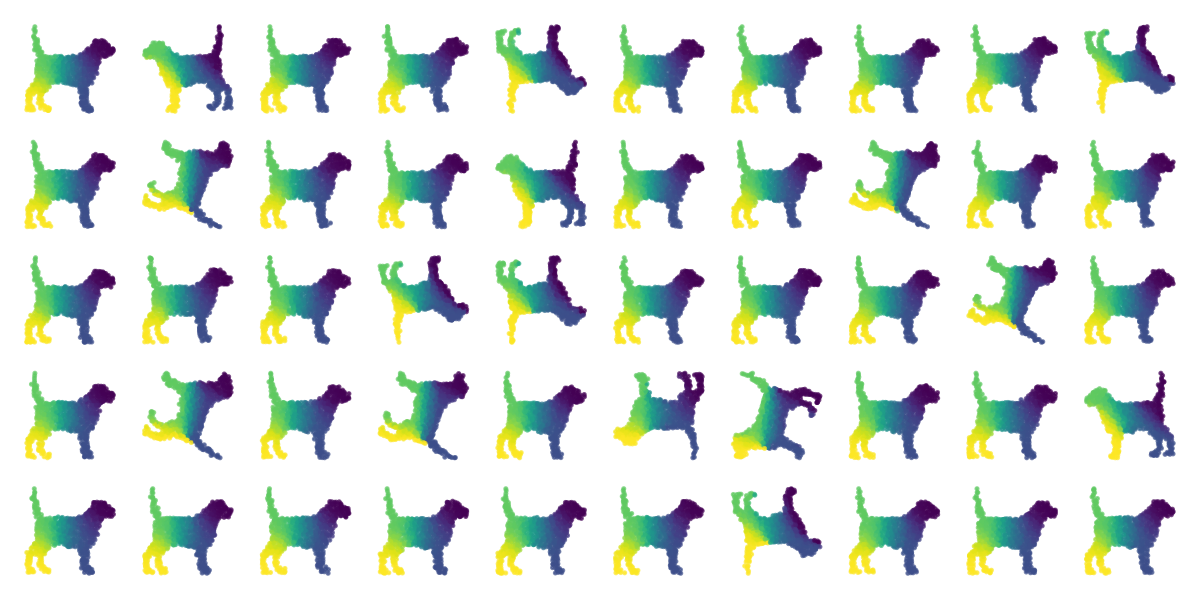}}
    \quad
    \subfloat[Matching results with Geo-GW.]{\label{Fig: 2d matching GW geodesic}
    \includegraphics[scale=0.4]{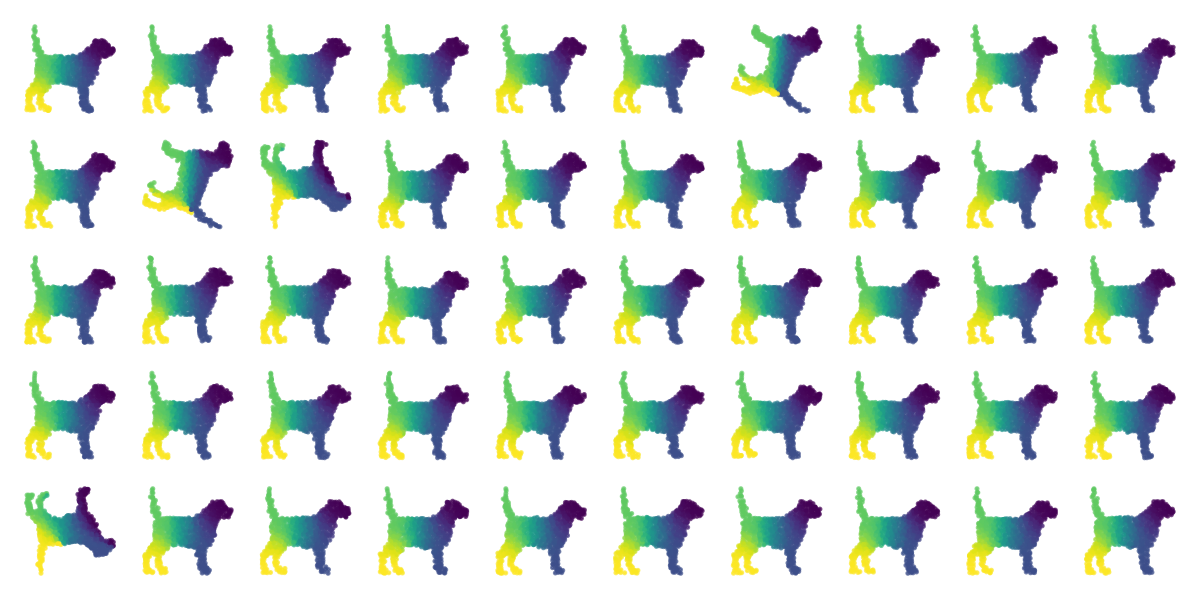}}
    \\
    \subfloat[Matching results with Fiedler-W.]{\label{Fig: 2d matching Fiedler}
    \includegraphics[scale=0.4]{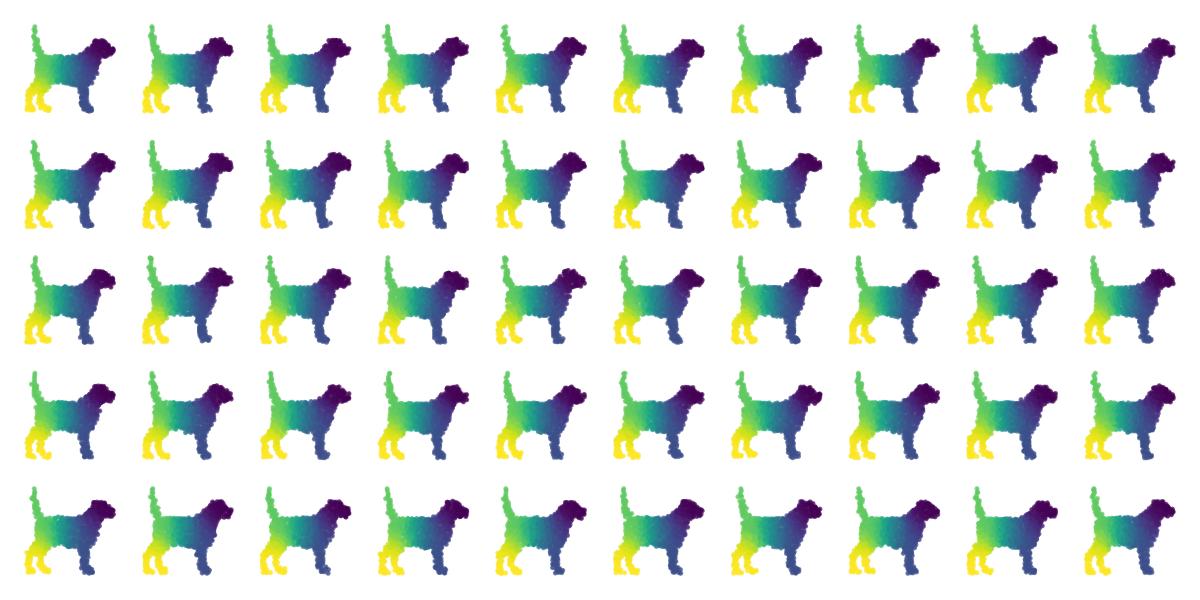}}
    \quad
    \subfloat[Matching results with UPCA-W.]{\label{Fig: 2d matching PCA Wass}
    \includegraphics[scale=0.4]{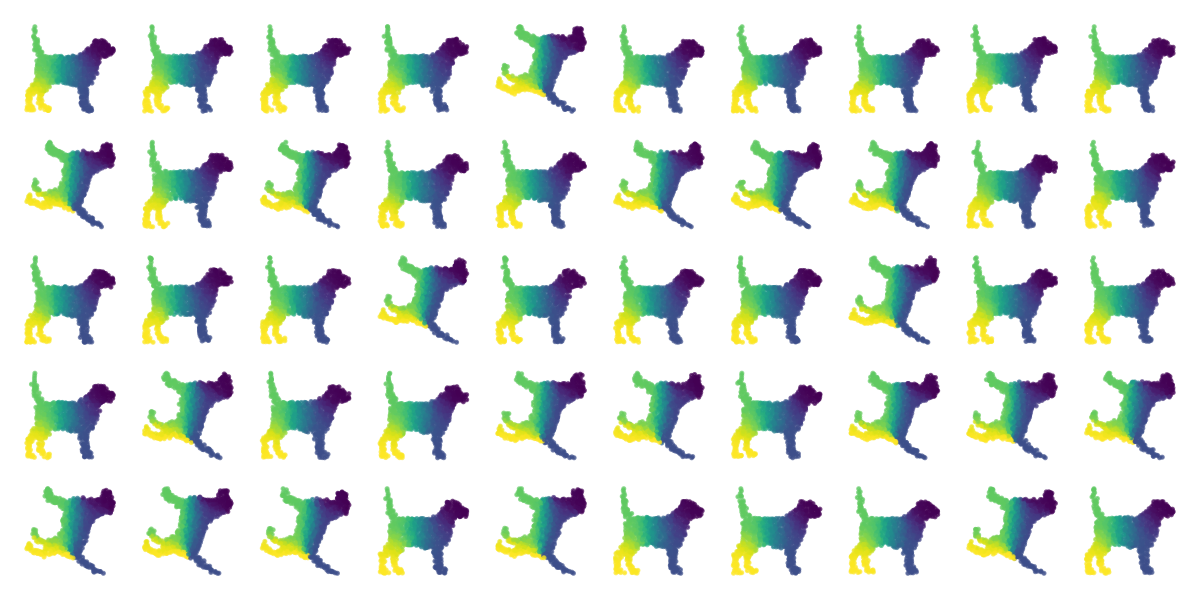}}
    \caption{Comparisons of PW matching results with different initialization approaches (\texttt{2D dog}). Matchings reflect the convergence results reported in Figure~\ref{Fig: PW init example}}
    \label{Fig: Example 2 - extra 2d dog}
\end{figure}

\begin{figure}[htbp!]
    \centering
    \subfloat[3-dimensional target $(X_1)$]{\label{Fig: pivot 3d}
   \includegraphics[scale=.35]{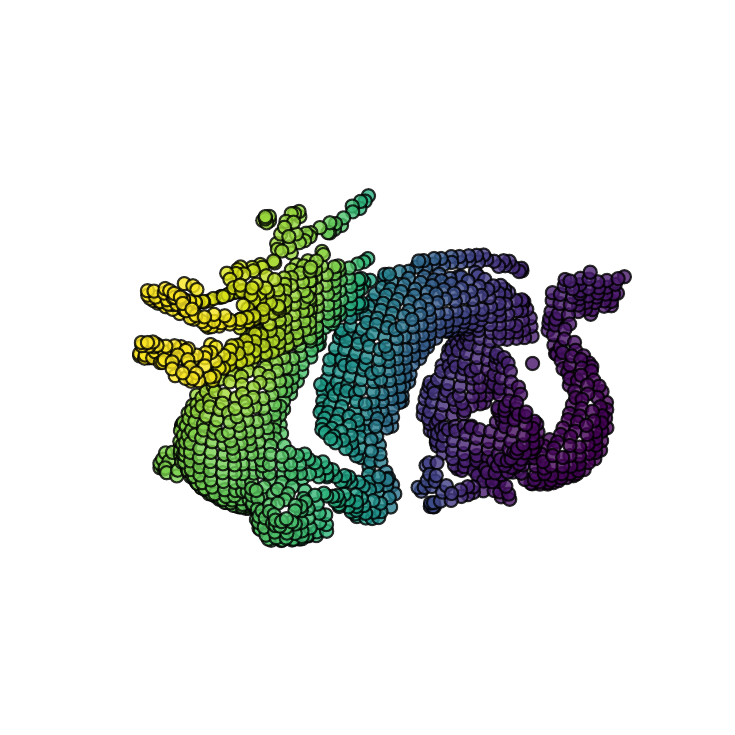}}
    \quad
    \subfloat[3-dimensonal perturbed sources $\big(X_2^{(i)}\big)$.]{\label{Fig: dataset 3d}
    \includegraphics[scale=0.4]{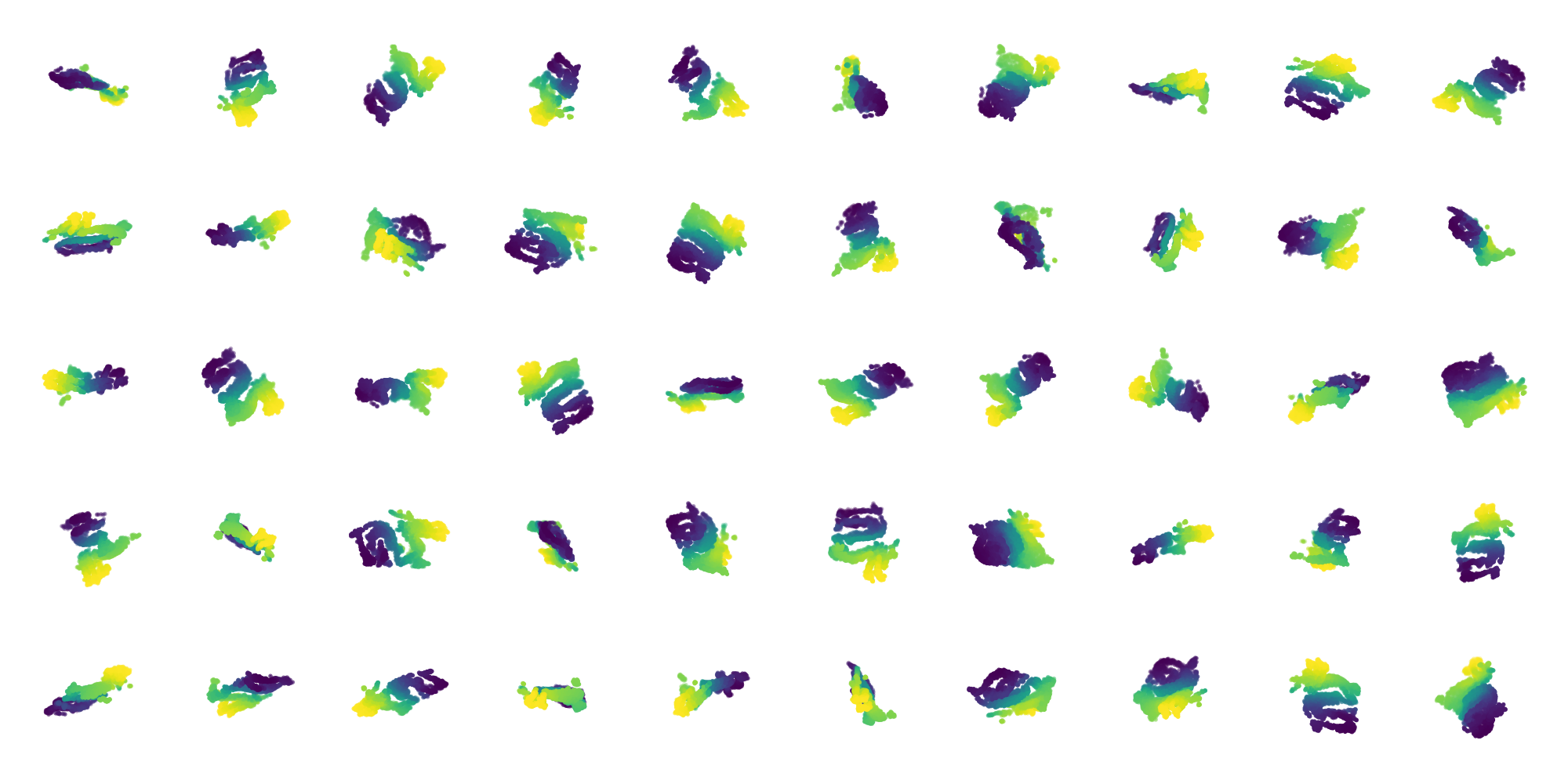}}
    \\
    \subfloat[Matching results with Euc-GW.]{\label{Fig: 3d matching GW euclidean}
    \includegraphics[scale=0.3]{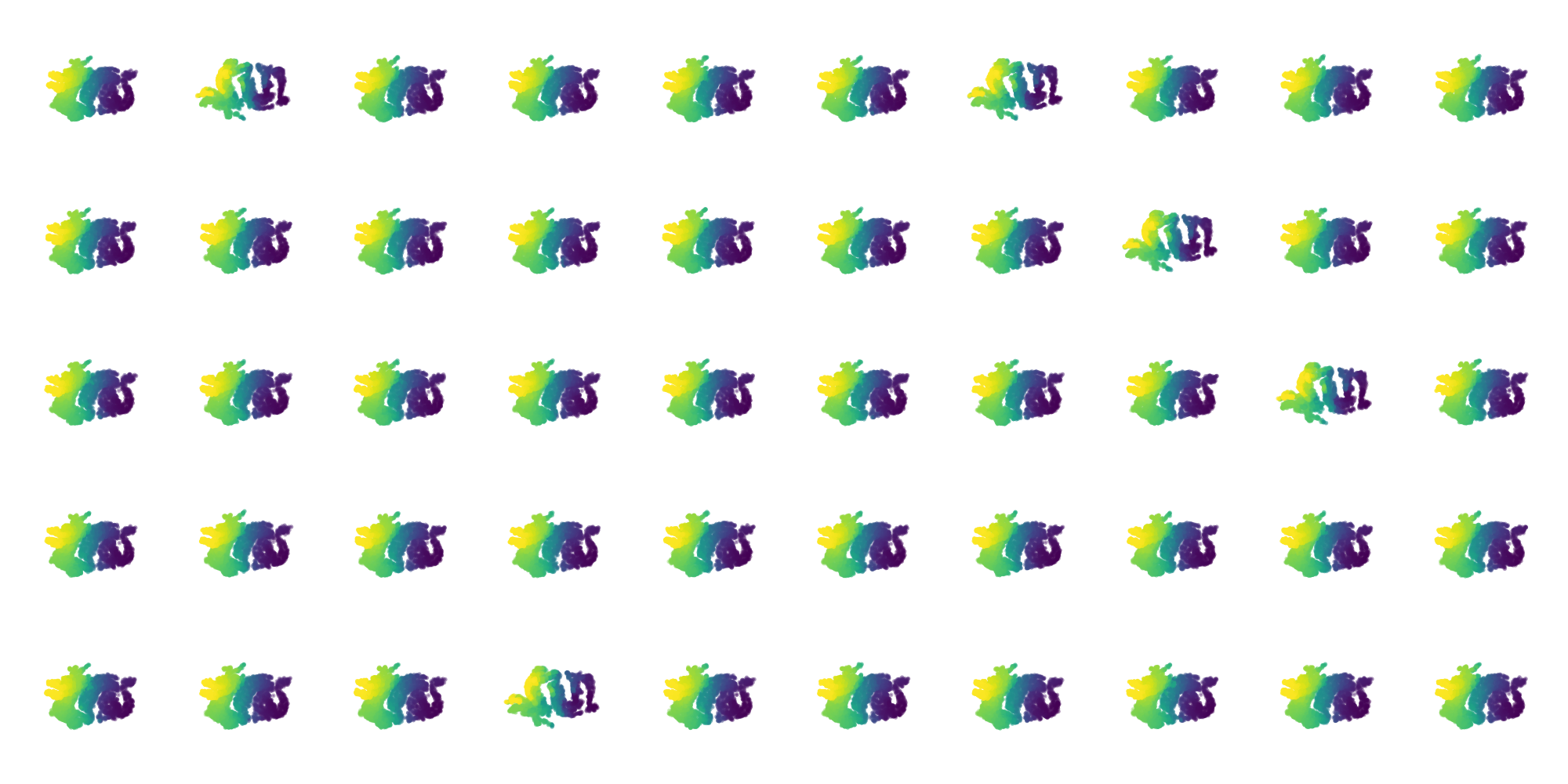}}
    \quad
    \subfloat[Matching results with Geo-GW.]{\label{Fig: 3d matching GW geodesic}
    \includegraphics[scale=0.3]{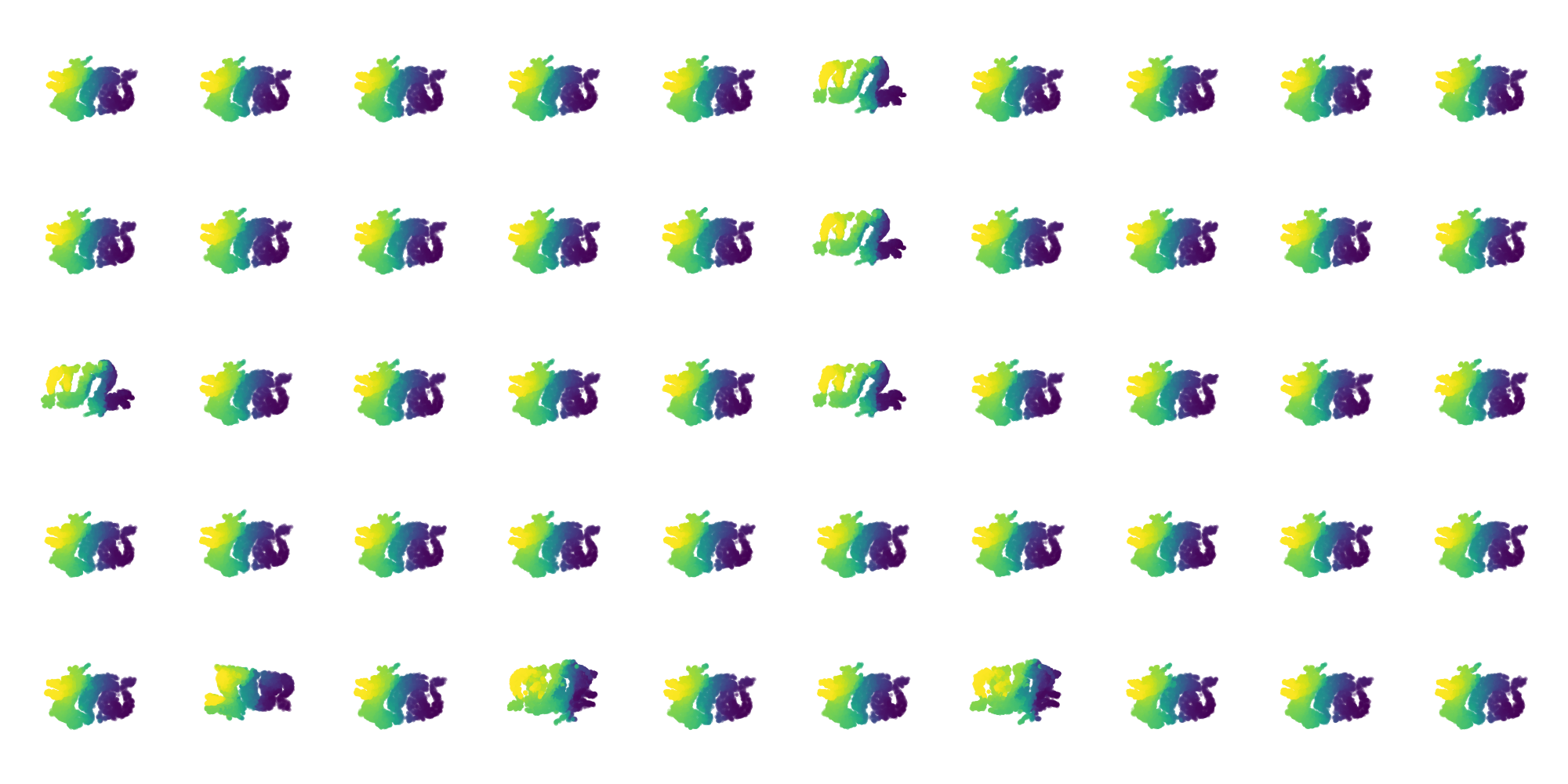}}
    \\
    \subfloat[Matching results with Fiedler-W.]{\label{Fig: 3d matching Fiedler}
    \includegraphics[scale=0.3]{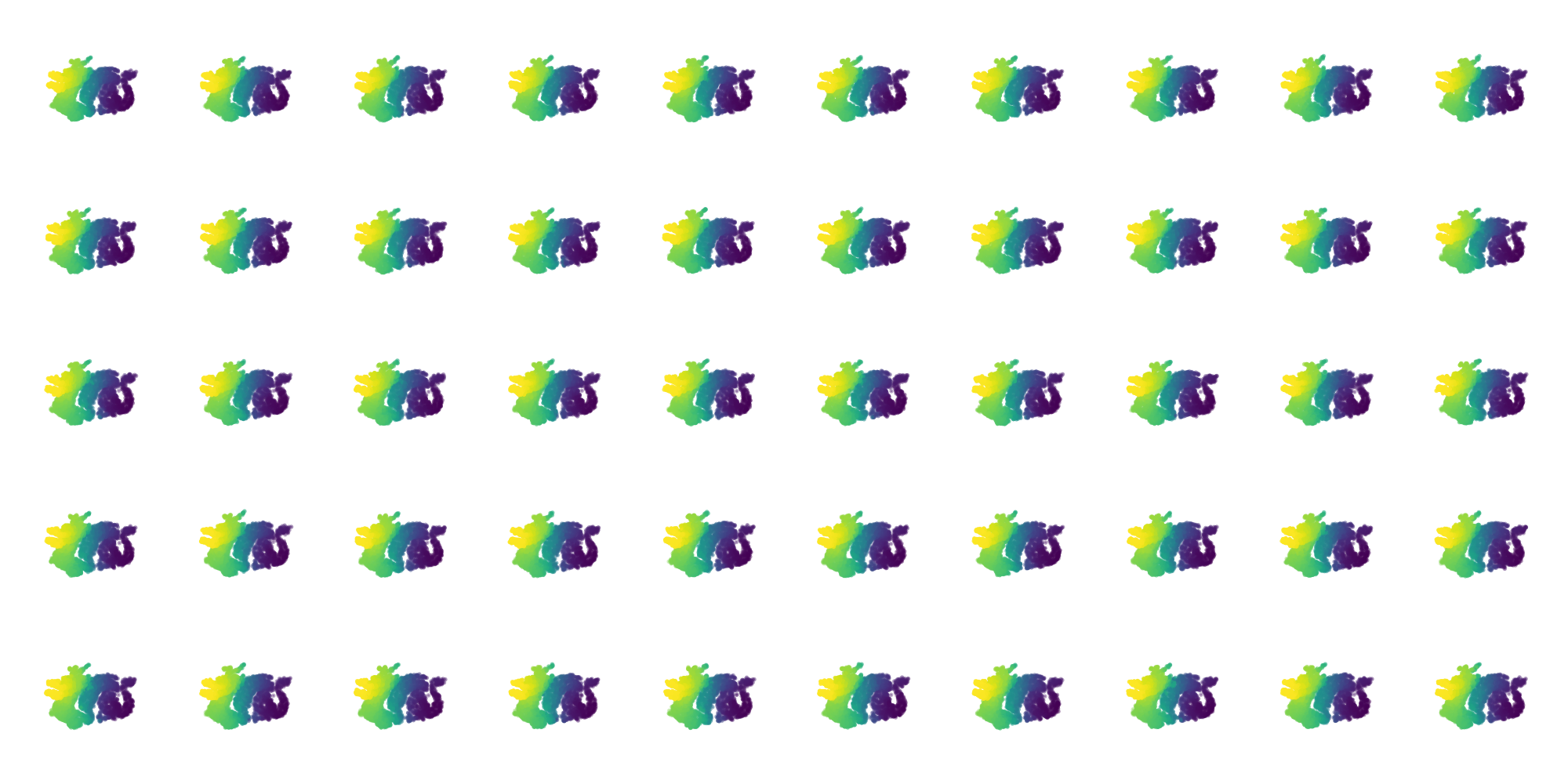}}
    \quad
    \subfloat[Matching results with UPCA-W.]{\label{Fig: 3d matching PCA Wass}
    \includegraphics[scale=0.3]{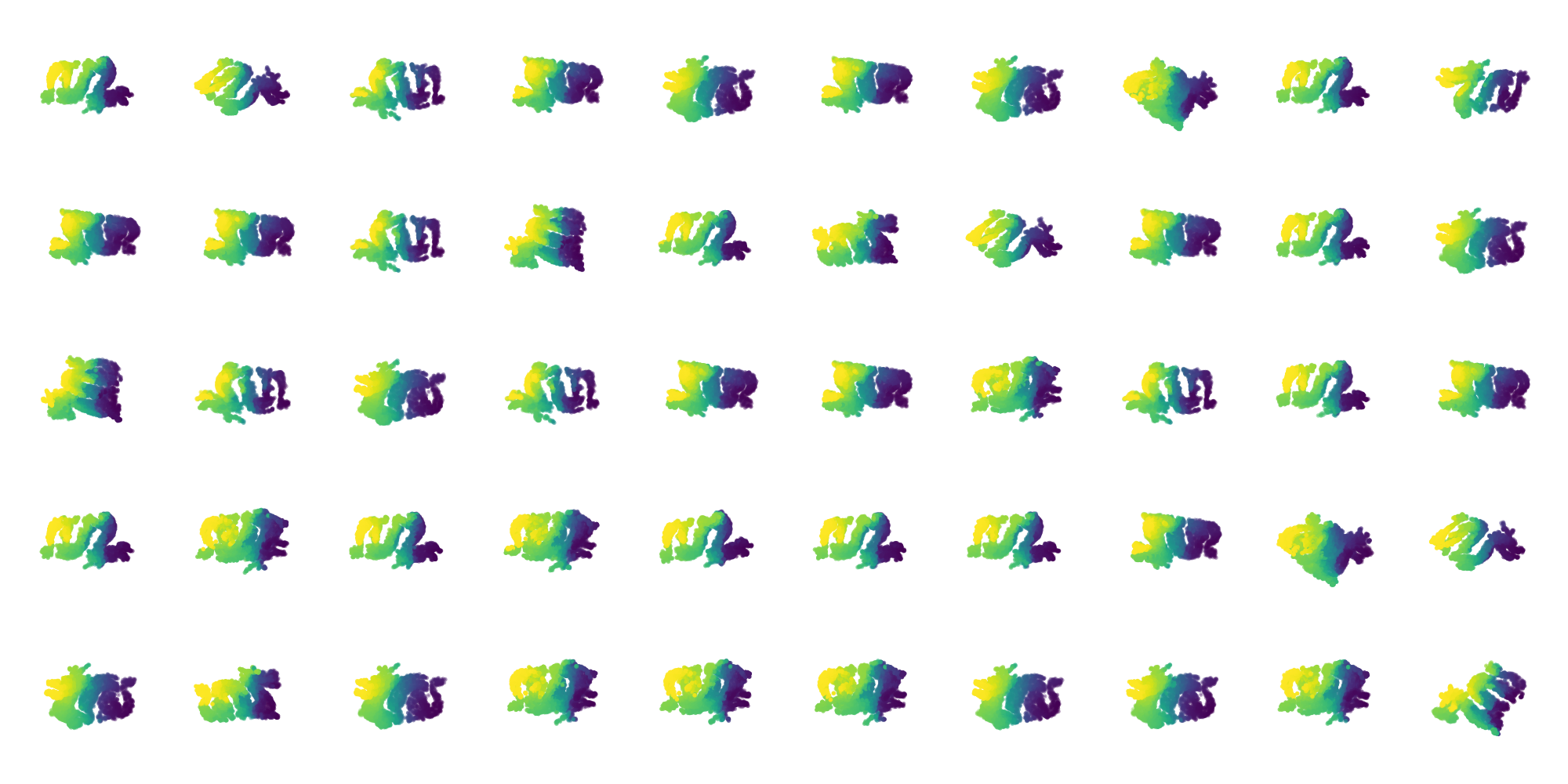}}
    \caption{Comparisons of PW matching results with different initialization approaches (\texttt{3D dragon}). Matchings reflect the convergence results reported in Figure~\ref{Fig: PW init example}}
    \label{Fig: Example 2 - extra 3d dragon}
\end{figure}

\begin{figure}[htbp!]
\begin{center}
\includegraphics[width=8cm]{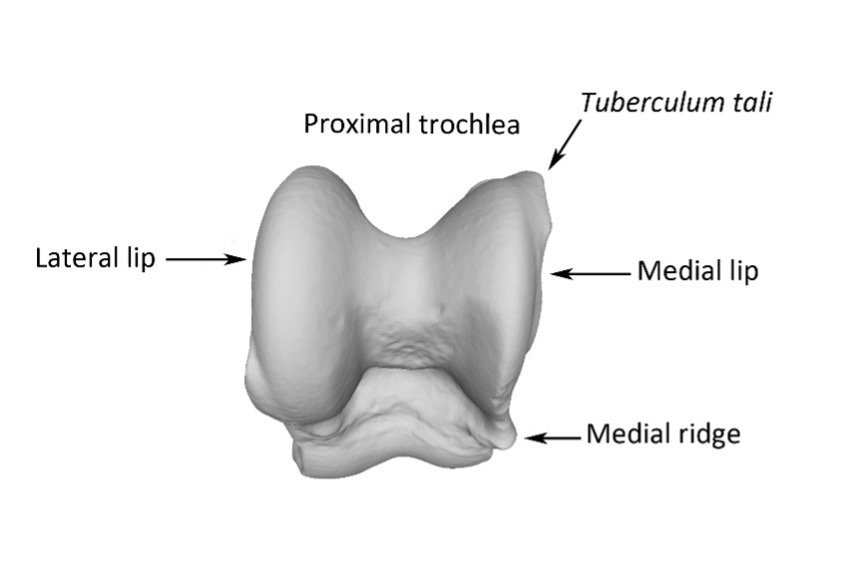}
\caption{Proximal view of the modern astragalus and its main anatomical features presented in the Section~\ref{Sec: archaeology}}
\label{Fig: Bone description}
\end{center}
\end{figure}


\end{document}